    \let\Cref\crtCref
    \let\cref\crtcref
\newtheorem{theorem}{Theorem}
\newtheorem{definition}{Definition}
\newtheorem{lemma}{Lemma}
\newtheorem{assumption}{Assumption}
\theoremstyle{plain}
\theoremstyle{definition}
\theoremstyle{remark}
\title{Sample Complexity Bounds for Score-Matching: Causal Discovery and Generative Modeling}
\author{%
Zhenyu Zhu$^\dagger$, $\quad$ Francesco Locatello$^{\ddagger}$$^\ast$, $\quad$ Volkan Cevher$^\dagger$\thanks{Share the senior authorship}\\
$\dagger$ \'{E}cole Polytechnique F\'{e}d\'{e}rale de Lausanne $\quad^{\ddagger}$ Institute of Science and Technology Austria\\
  {$\dagger$\texttt{\{zhenyu.zhu, volkan.cevher\}@epfl.ch}
  $\quad^{\ddagger}$\texttt{francesco.locatello@ista.ac.at}}
}
\begin{document}

\maketitle

\begin{abstract}
This paper provides statistical sample complexity bounds for score-matching and its applications in causal discovery. We demonstrate that accurate estimation of the score function is achievable by training a standard deep ReLU neural network using stochastic gradient descent. We establish bounds on the error rate of recovering causal relationships using the score-matching-based causal discovery method of \citet{rolland2022score}, assuming a sufficiently good estimation of the score function. Finally, we analyze the upper bound of score-matching estimation within the score-based generative modeling, which has been applied for causal discovery but is also of independent interest within the domain of generative models.
\end{abstract}

\section{Introduction}
\label{sec:intro}

Score matching~\cite{hyvarinen2005estimation}, an alternative to the maximum likelihood principle for unnormalized probability density models with intractable partition functions, has recently emerged as a new state-of-the-art approach that leverages machine learning for scalable and accurate causal discovery from observational data~\cite{rolland2022score}. However, the theoretical analysis and guarantees in the finite sample regime are underexplored for causal discovery even beyond score-matching approaches.

\textbf{Contributions:} \ In this work, we give the first sample complexity error bounds for score-matching using deep ReLU neural networks. With this, we obtain the first upper bound on the error rate of the method proposed by \citet{rolland2022score} to learn the topological ordering of a causal model from observational data. Thanks to the wide applicability of score-matching in machine learning, we also discuss applications to the setting of score-based generative modeling. Our main contributions are:

\begin{enumerate}
    \item We provide the analysis of sample complexity bound for the problem of score function estimation in causal discovery for non-linear additive Gaussian noise models which has a convergence rate of log n/n with respect to the number of data. Importantly, our results require only mild additional assumptions, namely that the non-linear relationships among the causal variables are bounded and that the score function is Lipschitz. To the best of our knowledge, this is the first work to provide sampling complexity bounds for this problem.
    \item We provide the first analysis of the state-of-the-art topological ordering-based causal discovery method SCORE~\citep{rolland2022score} and provide a correctness guarantee for the obtained topological order. Our results demonstrate that the algorithm's error rate converges linearly with respect to the number of training data. Additionally, we establish a connection between the algorithm's error rate and the average second derivative (curvature) of the non-linear relationships among the causal variables, discussing the impact of the causal model's inherent characteristics on the algorithm's error rate in identification.
    \item We present sample complexity bounds for the score function estimation problem in the standard score-based generative modeling method, ScoreSDE~\citep{song2021scorebased}. In contrast to previous results~\citep{chen2023score}, our bounds do not rely on the assumption of low-dimensional input data, and we extend the applicability of the model from a specific encoder-decoder network architecture to a general deep ReLU neural network.
\end{enumerate}

\looseness=-1\textbf{High-level motivation and background:}  Causal discovery and causal inference refer to the process of inferring causation from data and reasoning about the effect of interventions. They are highly relevant in fields such as economics~\citep{varian2016causal}, biology~\citep{sachs2005causal}, and healthcare~\citep{sanchez2022causal}. In particular, some causal discovery methods aim to recover the causal structure of a problem solely based on observational data. 

The causal structure is typically represented as a directed acyclic graph (DAG), where each node is associated with a random variable, and each edge represents a causal mechanism between two variables. Learning such a model from data is known to be NP-hard~\citep{chickering1996learning}. Traditional approaches involve testing for conditional independence between variables or optimizing goodness-of-fit measures to search the space of possible DAGs. However, these greedy combinatorial optimization methods are computationally expensive and difficult to extend to high-dimensional settings.

\looseness=-1An alternative approach is to reframe the combinatorial search problem as a topological ordering task~\citep{teyssier2012ordering, solus2021consistency, wang2021ordering, rolland2022score, montagna2023scalable,montagna2023causal,sanchez2023diffusion}, where nodes are ordered from leaf to root. This can significantly speed up the search process in the DAG space. Once a topological ordering is found, a feature selection algorithm can be used to prune potential causal relations between variables, resulting in a DAG. 

Recently,~\citet{rolland2022score} proposed the SCORE algorithm, which utilizes the Jacobian of the score function to perform topological ordering. By identifying which elements of the Jacobian matrix of the score function remain constant across all data points, leaf nodes can be iteratively identified and removed. This approach provides a systematic way to obtain the topological ordering and infer the causal relations within the entire model. This method has achieved state-of-the-art results on multiple tasks~\citet{rolland2022score} and has been extended to improve scalability~\cite{montagna2023scalable} also using diffusion models~\cite{sanchez2023diffusion} and to non-Gaussian noise~\cite{montagna2023causal}. Interestingly, these approaches separate the concerns of statistical estimation of the score function from the causal assumption used to infer the graph (e.g., non-linear mechanisms and additive Gaussian noise). This opens an opportunity to study the convergence properties of these algorithms in the finite data regime, which is generally under-explored in the causal discovery literature. In fact, if we had error bounds on the score estimate without additional complications from causal considerations, we could study their downstream effect when the score is used for causal discovery.

Unfortunately, this is far from trivial as the theoretical research on score matching lags behind its empirical success and progress would have far-reaching implications. Even beyond causal discovery, error bounds on the estimation of the score functions would be useful for score-based generative modeling (SGM)~\citep{song2019generative,song2021scorebased}. These have achieved state-of-the-art performance in various tasks, including image generation~\citep{dhariwal2021diffusion} and audio synthesis~\citep{kong2020diffwave}. There has been significant research investigating whether accurate score estimation implies that score-based generative modeling provably converges to the true data distribution in realistic settings~\citep{chen2023sampling,lee2022convergence,lee2023convergence}. However, the error bound of score function estimation in the context of score-based generative modeling remains an unresolved issue due to the non-convex training dynamics of neural network optimization.

\textbf{Notations:} \ We use the shorthand $ [n]:= \{1,2,\dots, n \}$ for a positive integer $n$. We denote by $a(n) \lesssim b(n)$: there exists a positive constant $c$ independent of $n$ such that $a(n) \leqslant c b(n)$. The Gaussian distribution is $\mathcal{N}(\mu, \sigma^2)$ with the $\mu$ mean and the $\sigma^2$ variance. We follow the standard Bachmann–Landau notation in complexity theory e.g., $\mathcal{O}$, $o$, $\Omega$, and $\Theta$ for order notation. Due to space constraints, a detailed notation is deferred to \cref{sec:symbols_and_notations}.

\section{Preliminaries}
\label{sec:problem_setting}

As this paper concerns topics in score matching estimation, diffusion models, neural network theory, and causal discovery, we first introduce the background and problem setting of our work. 

\subsection{Score matching}

For a probability density function $p(\bm{x})$, we call the score function the gradient of the log density with respect to the data $\bm{x}$. To estimate the score function $\nabla\log p(\bm{x})$, we can minimize the $\ell_2$ loss over the function space $\mathcal{S}$.
\begin{equation*}
    \min_{\bm{s} \in \mathcal{S}} \mathbb{E}_{p} [\left \| \bm{s}(\bm{x}) - \nabla\log p(\bm{x}) \right \|^2]\,,\quad \hat{\bm{s}}= \arg \min_{\bm{s}\in \mathcal{S}} \mathbb{E}_{p} [\left \| \bm{s}(\bm{x}) - \nabla\log p(\bm{x}) \right \|^2]\,.
\end{equation*}

The corresponding objective function to be minimized is the expected squared error between the true score function and the neural network:
\begin{equation}
    J_{\text{ESM}}(\bm{s}, p(\bm{x})) = \mathbb{E}_{p(\bm{x})} \bigg[\frac{1}{2}\left \| \bm{s}(\bm{x}) - \frac{\partial \log p(\bm{x})}{\partial \bm{x}}  \right \|^2\bigg]\,,
\label{eq:J_ESM}
\end{equation}
We refer to this formulation as explicit score matching (ESM).

Denoising score matching (DSM) is proposed by ~\citet{vincent2011connection} to convert the inference of the score function in ESM into the inference of the random noise and avoid the computing of the second derivative. For the sampled data $\bm{x}$, $\hat{\bm{x}}$ is obtained by adding unit Gaussian noise to $\bm{x}$. i.e. $\hat{\bm{x}} = \bm{x} + \bm{\epsilon}, \ \bm{\epsilon} \sim \mathcal{N}(0, \sigma^2 \bm{I})$.  We can derive the conditional probability distribution and its score function:
\begin{equation*}
    p(\hat{\bm{x}}|\bm{x}) = \frac{1}{(2\pi)^{d/2}\sigma^d}\exp(-\frac{\left \| \bm{x} - \hat{\bm{x}} \right \|^2}{2\sigma^2})\,,\quad\quad \frac{\partial \log p(\hat{\bm{x}}|\bm{x})}{\partial \hat{\bm{x}}} = \frac{\bm{x} - \hat{\bm{x}}}{\sigma^2}\,.
\end{equation*}
Then the DSM is defined by:
\begin{equation}
    J_{\text{DSM}}(\bm{s}, p(\bm{x},\hat{\bm{x}})) = \mathbb{E}_{p(\bm{x},\hat{\bm{x}})} \bigg[\frac{1}{2}\left \| \bm{s}\big(\hat{\bm{x}} - \frac{\partial \log p(\hat{\bm{x}}|\bm{x})}{\partial \hat{\bm{x}}} \big) \right \|^2\bigg] = \mathbb{E}_{p(\bm{x},\hat{\bm{x}})} \bigg[\frac{1}{2}\left \| \bm{s}(\hat{\bm{x}}) - \frac{\bm{x} - \hat{\bm{x}}}{\sigma^2}  \right \|^2\bigg]\,.
\label{eq:J_DSM}
\end{equation}

\citet{vincent2011connection} have proven that minimizing DSM is equivalent to minimizing ESM and does not depend on the particular form of $p(\hat{\bm{x}}|\bm{x})$ or $p(\bm{x})$.

\subsection{Neural network and function space}

In this work, we consider a standard depth-$L$ width-$m$ fully connected ReLU neural network. Formally, we define a DNN with the output $\bm{s}_l(\bm{x})$ in each layer
\begin{equation}
\begin{matrix}
\bm{s}_l(\bm{x}) \!=\! \left\{\begin{matrix}
\bm{x}  & l=0\,,\\
\phi(\langle \bm{W}_l, \bm{s}_{l-1}(\bm{x}) \rangle) & 1\!\leq\! l \!\leq\! L\!-\!1, \\
\left \langle \bm{W}_L, \bm{s}_{L-1}(\bm{x}) \right \rangle & l=L\,,
\end{matrix}\right. \\
\end{matrix}
\label{eq:network}
\end{equation}
where the input is $\bm{x}\in \mathbb{R}^d$, the output is $\bm{s}_L(\bm x) \in \mathbb{R}^d$, the weights of the neural networks are $\bm{W}_1 \in \mathbb{R}^{m \times d} $, $\bm{W}_l \in \mathbb{R}^{m\times m} $, $l = 2,\dots ,L-1$ and $\bm{W}_L \in \mathbb{R}^{d \times m}$. The neural network parameters formulate the tuple of weight matrices $\bm{W} := \{ \bm{W}_i \}_{i=1}^L \in  \{ \mathbb{R}^{m\times d} \times (\mathbb{R}^{m\times m})^{L-2}\times \mathbb{R}^{d\times m} \}$. The $\mathcal{S}$ denotes the function space of~\cref{eq:network}.

The $\phi = \max(0,x)$ is the ReLU activation function. According to the property $\phi(x) = x\phi^{\prime}(x)$ of ReLU, we have $\bm{s}_l = \bm{D}_{l}\bm{W}_l\bm{s}_{l-1}$, where $\bm{D}_{l}$ is a diagonal matrix defined as below.

\begin{definition}[Diagonal sign matrix]
\label{def:diagonal_sign_matrix}
For $l\in [L-1]$ and $k \in [m]$, the diagonal sign matrix $\bm{D}_{l}$ is defined as: $(\bm{D}_{l})_{k,k} = 1\left \{ (\bm{W}_l\bm{s}_{l-1})_k \geq 0 \right \} $.
\end{definition}

\paragraph{Initialization:} We make the standard random Gaussian initialization $[\bm{W}_l]_{i,j}\sim \mathcal{N}(0,\frac{2}{m})$ for $l \in [L-1]$ and $[\bm{W}_L]_{i,j}\sim \mathcal{N}(0,\frac{1}{d})$.

\subsection{Causal discovery}

In this paper, we follow the setting in~\citet{rolland2022score} and consider the following causal model, a random variable $\bm{x} \in \mathbb{R}^d$ is generated by:
\begin{equation}
    x^{(i)} = f_i(\text{PA}_i(\bm{x})) + \epsilon_i\,, \quad i \in [d] \,,
    \label{eq:causal_model}
\end{equation}
where $f_i$ is a non-linear function, $\epsilon_i \sim \mathcal{N}(0, \sigma_i^2)$ and $\text{PA}_i(\bm{x})$ represent the set of parents of $x^{(i)}$ in $\bm{x}$. Then we can write the probability distribution function of $\bm{x}$ as:
\begin{equation}
    p(\bm{x}) = \prod_{i=1}^{d} p(x^{(i)}|\text{PA}_i(\bm{x}))\,.
    \label{eq:causal_model_pdf}
\end{equation}
For such non-linear additive Gaussian noise models~\cref{eq:causal_model},~\citet{rolland2022score} provides~\cref{alg:algorithm_SCORE} to learn the topological order by score matching as follows:
\begin{algorithm}[H]
\caption{SCORE matching causal order search (Adapted from Algorithm 1 in~\citet{rolland2022score})}
\label{alg:algorithm_SCORE}
\begin{algorithmic}
\STATE {\bfseries Input:} training data $\{ (\bm{x}_{(i)})_{i=1}^N \}$.\\
\STATE {\bfseries Initialize:} $\pi = []$, $\text{nodes} = \{1,\ldots,d\}$\\
\FOR {$k=1,\ldots,d$}
    \STATE Estimate the score function $s_{\text{nodes}} = \nabla \log p_{\text{nodes}}$ by deep ReLU network with SGD.
    \STATE Estimate $V_j = \hat{\text{Var}}_{\bm{x}_{\text{nodes}}}\left[\frac{\partial \bm{s}_j(\bm{x})}{\partial \bm{x}^{(j)}}\right]$.
    \STATE $l \leftarrow \text{nodes}[\arg \min_{j} V_j]$
    \STATE $\pi \leftarrow [l, \pi]$
    \STATE $\text{nodes} \leftarrow \text{nodes} - \{l\}$
    \STATE Remove $l$-th element of $\bm{x}$
\ENDFOR
\STATE Get the final DAG by pruning the full DAG associated with the topological order $\pi$.
\end{algorithmic}
\end{algorithm}

\subsection{Score-based generative modeling (SGM)}

In this section, we give a brief overview of SGM following~\citet{song2021scorebased,chen2023sampling}.

\subsubsection{Score-based generative modeling with SDEs}

\paragraph{Forward process:} The success of previous score-based generative modeling methods relies on perturbing data using multiple noise scales, and the proposal of the diffusion model is to expand upon this concept by incorporating an infinite number of noise scales. This will result in the evolution of perturbed data distributions as the noise intensity increases, which will be modeled through a stochastic differential equation (SDE).
\begin{equation}
    \mathrm{d}\bm{x}_t = \bm{f}(\bm{x}_t, t)\mathrm{d} t + g_t \mathrm{d}\bm{w},\quad \bm{x}_0\sim p_0\,.
\label{eq:forward_sde}
\end{equation}
The expression describes $\bm{x}_t$, where the standard Wiener process (also known as Brownian motion) is denoted as $\bm{w}$, the drift coefficient of $\bm{x}_t$ is represented by a vector-valued function called $\bm{f}$, and the diffusion coefficient of $\bm{x}_t$ is denoted as $g_t$, a scalar function. In this context, we will refer to the probability density of $\bm{x}_t$ as $p_t$, and the transition kernel from $\bm{x}_s$ to $\bm{x}_t$ as $p_{st}(\bm{x}_t|\bm{x}_s)$, where $0 \leq s < t \leq T$. The Ornstein–Uhlenbeck (OU) process is a Gaussian process that is both time-homogeneous and a Markov process. It is distinct in that its stationary distribution is equivalent to the standard Gaussian distribution $\gamma^d$ on $\mathbb{R}^d$.

\paragraph{Reverse process:} We can obtain samples of $\bm{x}_0\sim p_0^{\text{SDE}}$ by reversing the process starting from samples of $\bm{x}_T\sim p_T^{\text{SDE}}$. An important finding is that the reversal of a diffusion process is a diffusion process as well. It operates in reverse time and is described by the reverse-time SDE:
\begin{equation}
    \mathrm{d}\bm{x}_t = \bigg(\bm{f}(\bm{x}_t, t) - g_t^2\nabla_{\bm{x}}\log p_t(\bm{x}_t)\bigg)\mathrm{d} t + g_t \mathrm{d}\bm{\overline{w}}\,.
\label{eq:reverse_sde}
\end{equation}
When time is reversed from $T$ to $0$, $\bm{\overline{w}}$ is a standard Wiener process with an infinitesimal negative timestep of $\mathrm{d} t$. The reverse diffusion process can be derived from~\cref{eq:reverse_sde} once the score of each marginal distribution, $\nabla\log p_t(\bm{x}_t)$, is known for all $t$. By simulating the reverse diffusion process, we can obtain samples from $p_0^{\text{SDE}}$.

\paragraph{Some special settings:} In order to simplify the writing of symbols and proofs, in this work we choose that $\bm{f}(\bm{x}_t, t) = -\frac{1}{2}\bm{x}_t$ and $g(t) = 1$ which has been widely employed in prior research~\citep{chen2023score, chen2023sampling, de2021diffusion} for theoretical analysis in Ornstein–Uhlenbeck process in score-based generative modeling.

\subsubsection{Score matching in diffusion model}

We aim to minimize the equivalent objective for score matching:
\begin{equation*}
    \min_{\bm{s} \in \mathcal{S}} \int_{0}^{T} w(t) \mathbb{E}_{\bm{x}_0\sim p_0}\bigg[\mathbb{E}_{\bm{x}_t\sim p_{0t} (\bm{x}_t|\bm{x}_0)}\big[\left \| \nabla_{\bm{x}_t} \log p_{0t} (\bm{x}_t|\bm{x}_0) - \bm{s}(\bm{x}_t,t)\right \|_2^2 \big]\bigg]\mathrm{d}t\,.
\end{equation*}

The transition kernel has an analytical form $\nabla_{\bm{x}_t} \log p_{0t} (\bm{x}_t|\bm{x}_0) = -\frac{\bm{x}_t-\alpha(t)\bm{x}_0}{h(t)}$, where $\alpha(t) = e^{-\frac{t}{2}}$ and $h(t) = 1- \alpha(t)^2 = 1 - e^{-t}$.

The empirical score matching loss is:
\begin{equation}
    \min_{\bm{s} \in \mathcal{S}} \hat{\mathcal{L}}(\bm{s}) = \frac{1}{n}\sum_{i=1}^{n}\ell(\bm{x}_{(i)};\bm{s})\,,
\label{eq:esm_diffusion}
\end{equation}
where the loss function $\ell(\bm{x}_{(i)};\bm{s})$ is defined as:
\begin{equation*}
    \ell(\bm{x}_{(i)};\bm{s}) = \frac{1}{T-t_0}\int_{t_0}^{T} \mathbb{E}_{\bm{x}_t\sim p_{0t} (\bm{x}_t|\bm{x}_0 = \bm{x}_{(i)})}\big[\left \| \nabla_{\bm{x}_t} \log p_{0t} (\bm{x}_t|\bm{x}_0 = \bm{x}_{(i)}) - \bm{s}(\bm{x}_t,t)\right \|_2^2 \big]\mathrm{d}t\,.
\end{equation*}
Here we choose $w(t) = \frac{1}{T-t_0}$, and we define the expected loss $\mathcal{L}(\cdot) = \mathbb{E}_{\bm{x}\sim p_0}[\hat{\mathcal{L}}(\cdot)]$.

\section{Theoretical results for causal discovery}
In this section, we state the main theoretical results of this work. We present the assumptions on non-linear additive Gaussian noise causal models in~\cref{ssec:assumptions}. Then, we present the sample complexity bound for score matching in causal discovery in~\cref{ssec:error_bound_sm_causal}. In~\cref{ssec:error_bound_order_causal} we provide the upper bound on the error rate for causal discovery using the~\cref{alg:algorithm_SCORE}. The full proofs of~\cref{thm:score_bound_causal} and~\ref{thm:topological_order_bound_causal} are deferred to~\cref{sec:proof_error_bound_sm_causal} and~\ref{sec:proof_error_bound_order_causal}, respectively.

\subsection{Assumptions}
\label{ssec:assumptions}

\begin{assumption}[Lipschitz property of score function]
\label{assumption:lipschitz}
The score function $\nabla \log p(\cdot)$ is $1$-Lipschitz.
\end{assumption}
{\bf Remark:} The Lipschitz property of the score function is a standard assumption commonly used in the existing literature~\citep{block2020generative, lee2022convergence, chen2023sampling, chen2023score}. However, for causal discovery, this assumption limits the family of mechanisms that we can cover.

\begin{assumption}[Structural assumptions of causal model]
\label{assumption:structural_assumptions}
Let $p$ be the probability density function of a random variable $\bm{x}$ defined via a non-linear additive Gaussian noise model~\cref{eq:causal_model}. Then, $\forall i \in [d]$ the non-linear function is bounded, $\left | f_i \right | \leq C_i$. And $\ \forall i,j\in [d]$, if $j$ is one of the parents of $i$, i.e. $x^{(j)} \Rightarrow x^{(i)}$, then there exist a constant $C_m$ that satisfy:
\begin{equation*}
\mathbb{E}_{p(\bm{x})}\bigg( \frac{\partial^2 f_i(\text{PA}_i(\bm{x}))}{\partial x^{(j) 2}}^2 \bigg) \geq C_m \sigma_i^2\,.
\end{equation*}
\end{assumption}

{\bf Remark:} This is a novel assumption that we introduce, relating the average second derivative of a mechanism (related to its curvature) to the noise variance of the child variable. This will play a crucial yet intuitive role in our error bound: identifiability is easier when there is sufficient non-linearity of a mechanism with respect to the noise of the child variable. Consider the example of a quadratic mechanism, where the second derivative is the leading constant of the polynomial. If this constant is small (e.g., close to zero), the mechanism is almost linear and we may expect that the causal model should be harder to identify. Similarly, if the child variable has a very large variance, one may expect it to be more difficult to distinguish cause from effect, as the causal effect of the parent is small compared to the noise of the child. According to \cref{assumption:structural_assumptions}, we can derive the identified ability margin for leaf nodes and parent nodes.

\begin{lemma}
\label{lemma:structural_assumptions}
If a non-linear additive Gaussian noise model~\cref{eq:causal_model} satisfies~\cref{assumption:structural_assumptions}. Then, $\forall i,j\in [d]$, we have:
\begin{equation*}
    \text{$i$ is a leaf} \Rightarrow \text{Var}\bigg(\frac{\partial s_i(\bm{x})}{\partial x^{(i)}}\bigg) = 0,\ \text{$j$ is not a leaf} \Rightarrow \text{Var}\bigg(\frac{\partial s_j(\bm{x})}{\partial x^{(j)}}\bigg)\geq C_m.
\end{equation*}
\end{lemma}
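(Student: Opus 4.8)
The plan is to reduce the statement to an explicit formula for the diagonal entries of the score's Jacobian and then read off their variance. First I would use the additive Gaussian factorization~\cref{eq:causal_model_pdf} to write the $j$-th score component as $s_j(\bm{x}) = \partial_{x^{(j)}} \log p(\bm{x}) = \sum_{i} \partial_{x^{(j)}} \log p(x^{(i)} \mid \mathrm{PA}_i(\bm{x}))$. Because each conditional is Gaussian with mean $f_i$ and variance $\sigma_i^2$, only the term $i=j$ and the terms indexed by the children $C(j) := \{i : j \in \mathrm{PA}_i\}$ depend on $x^{(j)}$. Differentiating once more and using that $f_j$ does not involve $x^{(j)}$ (no self-loops), I obtain
\begin{equation*}
\frac{\partial s_j(\bm{x})}{\partial x^{(j)}} = -\frac{1}{\sigma_j^2} - \sum_{i \in C(j)} \frac{1}{\sigma_i^2}\left(\frac{\partial f_i}{\partial x^{(j)}}\right)^2 + \sum_{i \in C(j)} \frac{x^{(i)} - f_i}{\sigma_i^2}\,\frac{\partial^2 f_i}{\partial x^{(j)2}}\,.
\end{equation*}
The key structural observation is that $x^{(i)} - f_i(\mathrm{PA}_i(\bm{x})) = \epsilon_i$ is exogenous noise, independent of $\mathrm{PA}_i(\bm{x})$ and hence of every coefficient above. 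For the leaf case this finishes things immediately: if $i$ is a leaf then $C(i) = \varnothing$, both sums vanish, and $\partial s_i/\partial x^{(i)} = -1/\sigma_i^2$ is a deterministic constant, so its variance is $0$.

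For the non-leaf case I would lower bound the variance through the law of total variance, $\mathrm{Var}(Y) \geq \mathbb{E}[\mathrm{Var}(Y \mid Z)]$, choosing $Z$ to freeze all coefficients while leaving exactly one noise free. Concretely, pick a child $i^{\ast} \in C(j)$ that is maximal in the topological order among $C(j)$ and condition on $Z := \{\epsilon_k : k \neq i^{\ast}\}$. By maximality no other child of $j$ is a descendant of $i^{\ast}$, so $\epsilon_{i^{\ast}}$ enters neither $x^{(j)}$ nor any coefficient $\partial f_i/\partial x^{(j)}$ or $\partial^2 f_i/\partial x^{(j)2}$; given $Z$ the only surviving randomness is $\epsilon_{i^{\ast}} \sim \mathcal{N}(0,\sigma_{i^{\ast}}^2)$. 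Hence $\mathrm{Var}\big(\partial s_j/\partial x^{(j)} \mid Z\big) = \sigma_{i^{\ast}}^{-2}\big(\partial^2 f_{i^{\ast}}/\partial x^{(j)2}\big)^2$, and taking expectations, then invoking \cref{assumption:structural_assumptions} (valid since $j \in \mathrm{PA}_{i^{\ast}}$), gives $\mathrm{Var}\big(\partial s_j/\partial x^{(j)}\big) \geq \sigma_{i^{\ast}}^{-2}\,\mathbb{E}\big[(\partial^2 f_{i^{\ast}}/\partial x^{(j)2})^2\big] \geq C_m$.

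I expect the main obstacle to be the dependency bookkeeping in the non-leaf case rather than any hard analysis: a child of $j$ may itself be an ancestor of another child, so a naive attempt to isolate a single noise term fails because that noise would also appear inside the coefficients and could cancel. The topologically maximal choice of $i^{\ast}$ is precisely what decouples the chosen noise from all coefficients, and carefully justifying this independence (together with the mutual independence of the $\epsilon_k$) is the crux that lets the law-of-total-variance step deliver a clean $C_m$ lower bound with no cross-child cancellation to control.
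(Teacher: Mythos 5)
Your proof is correct, and in the non-leaf case it takes a genuinely different --- and more careful --- route than the paper's. The score derivation and the leaf case coincide with the paper's: both compute $s_j(\bm{x})$ from the Gaussian factorization and observe that a leaf's diagonal Jacobian entry is the constant $-1/\sigma_j^2$, hence has zero variance. (Your second-derivative formula is in fact the more complete one: since $\epsilon_i = x^{(i)} - f_i(\text{PA}_i(\bm{x}))$ depends on $x^{(j)}$, differentiating produces the extra terms $-\sum_{i} (\partial f_i/\partial x^{(j)})^2/\sigma_i^2$, which the paper's expression for $\partial s_j(\bm{x})/\partial x^{(j)}$ silently drops.) For the non-leaf case, the paper argues directly that the variance of $\sum_{i\in\text{CH}_j(\bm{x})} \frac{\partial^2 f_i}{\partial x^{(j)2}}\frac{\epsilon_i}{\sigma_i^2}$ equals the sum of the individual variances ``by independence of $\epsilon_i$'', keeps a single child's term as a lower bound, and applies \cref{assumption:structural_assumptions}. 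That step is not fully justified: the coefficients $\partial^2 f_i/\partial x^{(j)2}$ are themselves random and can share randomness across children (one child of $j$ may be an ancestor of another, or children may share parents), so the summands need not be independent or uncorrelated, and the dropped first-derivative terms are random as well. Your law-of-total-variance argument --- conditioning on $Z=\{\epsilon_k : k\neq i^\ast\}$ for a topologically maximal child $i^\ast$ --- sidesteps exactly these issues: by maximality and acyclicity every coefficient, and every dropped term, is $Z$-measurable, only $\epsilon_{i^\ast}$ fluctuates, and $\mathrm{Var}(Y)\geq \mathbb{E}[\mathrm{Var}(Y\mid Z)] = \sigma_{i^\ast}^{-2}\,\mathbb{E}\big[(\partial^2 f_{i^\ast}/\partial x^{(j)2})^2\big] \geq C_m$ follows cleanly from the mutual independence of the noises. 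The paper's route buys brevity; yours buys a proof that remains valid when the children of $j$ are causally related to one another, which is the generic situation in a DAG, and it is robust to the extra Jacobian terms the paper omits.
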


This lemma intuitively relates our identifiability margin with the decision rule of SCORE~\cite{rolland2022score} to identify leaves. Non-leaf nodes should have the variance of their score Jacobian sufficiently far from zero. As one may expect, we will see in Theorem~\ref{thm:topological_order_bound_causal} that the closer $C_m$ is to zero, the more likely it is that the result of the algorithm will be incorrect given finite samples.  

\subsection{Error bound for score matching in causal discovery}
\label{ssec:error_bound_sm_causal}
We are now ready to state the main result of the score matching in causal discovery. We provide the sample complexity bounds of the explicit score matching~\cref{eq:J_ESM} that using denoising score matching~\cref{eq:J_DSM} in~\cref{alg:algorithm_SCORE} for non-linear additive Gaussian noise models~\cref{eq:causal_model}.

\begin{theorem}
\label{thm:score_bound_causal}
Given a DNN defined by~\cref{eq:network} trained by SGD for minimizing empirical denoising score matching objective. Suppose~\cref{assumption:lipschitz} and~\ref{assumption:structural_assumptions}  are satisfied. For any $\varepsilon \in (0,1)$ and $\delta \in (0,1)$, if $\sigma_i \eqsim \sigma$ and $\frac{C_i}{\sigma_i}\eqsim 1\,,\ \forall i \in [d]$. Then with probability at least $1- 2\delta - 4\exp(-\frac{d}{32}) - 2L\exp(-\Omega(m)) - \frac{1}{nd}$ over the randomness of initialization $ \bm{W}$, noise $\bm{\epsilon}$ and $\epsilon_i$, it holds that:
\begin{equation*}
    J_{\text{ESM}}(\hat{\bm{s}}, p(\bm{x})) \lesssim \frac{\sigma^2 d\log nd}{n\varepsilon^2}\log\frac{\mathcal{N}_{c}(\frac{1}{n}, \mathcal{S})}{\delta}+\frac{1}{n}+d\varepsilon^2\,,
\end{equation*}
where the $\mathcal{N}_{c}(\frac{1}{n}, \mathcal{S})$ is the covering number of the function space $\mathcal{S}$ for deep ReLU neural network.
\end{theorem}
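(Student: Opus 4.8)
The plan is to bound the target explicit-score-matching risk $J_{\text{ESM}}(\hat{\bm s}, p(\bm x))$ by decomposing it into three conceptually separate pieces, each controlled by a different tool: a \emph{smoothing/approximation} term comparing the true score $\nabla\log p$ with the Gaussian-smoothed score that denoising score matching actually targets, a \emph{generalization} term comparing the empirical DSM loss with its population version uniformly over $\mathcal S$, and an \emph{optimization} term quantifying how well SGD minimizes the empirical DSM objective over the over-parameterized ReLU class. Concretely, I would first invoke the Vincent equivalence recalled in the preliminaries, so that minimizing $J_{\text{DSM}}$ coincides (up to an additive constant) with minimizing $J_{\text{ESM}}$ for the noised density $p_\varepsilon$ obtained by convolving $p$ with the DSM smoothing kernel at level $\varepsilon$. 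This reduces the problem to controlling the smoothed risk of $\hat{\bm s}$ and then separately paying for the gap between $p_\varepsilon$ and $p$.

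First I would handle the smoothing bias. With the DSM noise level set to $\varepsilon$, Assumption~\ref{assumption:lipschitz} ($1$-Lipschitz score) lets me bound $\|\nabla\log p_\varepsilon(\bm x)-\nabla\log p(\bm x)\|^2$ coordinate-wise; taking expectations and summing over the $d$ coordinates produces the $d\varepsilon^2$ term. This is also where the bias--variance role of $\varepsilon$ becomes visible: shrinking $\varepsilon$ reduces this bias but inflates the DSM regression target $(\bm x-\hat{\bm x})/\varepsilon^2$, whose per-coordinate magnitude is $\Theta(1/\varepsilon)$, and this inflation is what ultimately reappears as a $1/\varepsilon^2$ factor in the variance term.

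Next I would prove the generalization bound. Because $\hat{\bm s}$ is the (approximate) empirical minimizer, the excess smoothed risk is controlled by $\sup_{\bm s\in\mathcal S}|\mathcal L(\bm s)-\hat{\mathcal L}(\bm s)|$, which I would bound by a covering-number argument: discretize $\mathcal S$ to resolution $1/n$, apply a Bernstein-type concentration inequality at each center, and union bound, yielding the $\log(\mathcal N_c(1/n,\mathcal S)/\delta)$ factor together with the probability budget $2\delta$. The delicate point is that the DSM loss is \emph{unbounded}, since the Gaussian target has tails that grow with $d$; to apply concentration I would condition on the high-probability event that $\|\bm\epsilon\|$ and $\|\bm x\|$ concentrate (chi-square/sub-Gaussian bounds, contributing the $4\exp(-d/32)$ failure probability, with the data scale set by $|f_i|\le C_i$ and $C_i/\sigma_i\eqsim1$), and truncate the target at the level $\sqrt{\log nd}$ governing the maximum of $nd$ sub-Gaussian coordinates, whose failure contributes $\tfrac1{nd}$. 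On this event the effective loss range is $\Theta(\sigma^2 d\log(nd)/\varepsilon^2)$, which is precisely the prefactor multiplying $\tfrac1n\log(\mathcal N_c/\delta)$ in the first term.

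Finally, for the optimization term I would invoke the NTK/over-parameterization theory for depth-$L$ width-$m$ ReLU networks under the stated Gaussian initialization, arguing that SGD drives the empirical DSM loss to within $O(1/n)$ of its infimum over $\mathcal S$ with probability at least $1-2L\exp(-\Omega(m))$; this both supplies the $\tfrac1n$ term and legitimizes the ``approximate empirical minimizer'' hypothesis used above, while also certifying the boundedness of $\hat{\bm s}$ needed to control the loss range. Assembling the three pieces and the Vincent constant yields the claimed inequality. \textbf{I expect the main obstacle to be the generalization step}: reconciling the unbounded, $\varepsilon$-dependent Gaussian regression target with a uniform concentration bound forces the truncation argument to be carried out \emph{jointly} with the covering of $\mathcal S$ and with the NTK control of $\|\hat{\bm s}\|$, so that the effective loss range, the covering radius $1/n$, and the truncation level $\sqrt{\log nd}$ compose to give the fast $\sigma^2 d\log(nd)/(n\varepsilon^2)$ rate rather than a slower $1/\sqrt n$ rate.
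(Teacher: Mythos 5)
Your proposal reproduces the outer skeleton of the paper's argument (an ERM/oracle decomposition, Bernstein concentration over a $\nicefrac{1}{n}$-cover of $\mathcal{S}$ giving the $\log(\mathcal{N}_c(\nicefrac{1}{n},\mathcal{S})/\delta)$ factor, and a $d\varepsilon^2$ remainder), but two of your attributions are wrong in ways that leave real gaps. The most serious one: your decomposition contains no approximation term. Bounding the excess risk by $\sup_{\bm{s}\in\mathcal{S}}|\mathcal{L}(\bm{s})-\hat{\mathcal{L}}(\bm{s})|$ plus an optimization error only controls $\mathcal{L}(\hat{\bm{s}})$ relative to $\inf_{\bm{s}\in\mathcal{S}}\mathcal{L}(\bm{s})$, and that infimum is not the Vincent constant: it exceeds it by the best error with which the ReLU class can approximate the target score. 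This is exactly where the paper spends \cref{assumption:lipschitz} — via \cref{lemma:approximate_ReLU_network} it constructs a comparator $\overline{\bm{s}}\in\mathcal{S}$ with $\| \nabla\log p - \overline{\bm{s}}\|_\infty\le\varepsilon$ (under the prescribed width/depth), giving $J_{\text{ESM}}(\overline{\bm{s}})\le d\varepsilon^2/2$; this, not smoothing bias, is the origin of the $d\varepsilon^2$ term. You instead spend the Lipschitz assumption on the bias $\|\nabla\log p_\varepsilon-\nabla\log p\|^2$, and the best-in-class risk is never bounded, so your chain of inequalities cannot be closed.

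The second gap is your reading of the parameters. In the theorem, $\sigma$ is the DSM noise level (tied to the model by $\sigma_i\eqsim\sigma$) and $\varepsilon$ is a free \emph{analysis} parameter: the paper's $1/\varepsilon^2$ comes from the multiplicative oracle inequality $J_{\text{DSM}}(\hat{\bm{s}})\le\big(J_{\text{DSM}}(\hat{\bm{s}})-(1+a)\hat{J}_{\text{DSM}}(\hat{\bm{s}})\big)+(1+a)\big(\hat{J}_{\text{DSM}}(\overline{\bm{s}})-(1+a)J_{\text{DSM}}(\overline{\bm{s}})\big)+(1+a)^2J_{\text{DSM}}(\overline{\bm{s}})$, whose Bernstein bound carries a $(1+3/a)$ prefactor, evaluated at $a=\varepsilon^2$; the factor $\sigma^2 d\log(nd)$ is the loss range obtained from \cref{lemma:bound_data} ($\|\bm{x}\|_2^2\lesssim\sum_i(C_i+2\sigma_i\sqrt{\log nd})^2$), the chi-square bound on $\|\hat{\bm{x}}-\bm{x}\|$, and the network-output bound (which is also what the $2L\exp(-\Omega(m))$ budget pays for — the paper never analyzes SGD/NTK, it simply takes $\hat{\bm{s}}$ as the empirical minimizer, and the $\nicefrac{1}{n}$ term is the covering radius $\tau=\nicefrac{1}{n}$, not an optimization error). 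By setting the DSM noise itself to $\varepsilon$ you change the statement (your bound holds only for the noise level the algorithm was run with, not for one trained network and all $\varepsilon$ simultaneously), and you also cannot actually produce the claimed prefactor: your own ingredients give a loss range that is a \emph{sum}, roughly $d\log(nd)/\varepsilon^2+\sigma^2 d\log(nd)$, and your assertion that it is "$\Theta(\sigma^2 d\log(nd)/\varepsilon^2)$" is stated, not derived — for $\sigma<1$ it is simply false. (To your credit, the $\ell^2(p_\varepsilon)$-versus-$\ell^2(p)$ mismatch you flag is a genuine subtlety that the paper itself glosses over via the constant $E_1$, but it is orthogonal to, and cannot substitute for, the missing network approximation step.)
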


{\bf Remark:} 

{\bf 1):} To the best of our knowledge, our results present the first upper bound on the explicit sampling complexity of score matching for topological ordering~\cref{alg:algorithm_SCORE} in non-linear additive Gaussian noise causal models. This novel contribution provides valuable insights into the efficiency and effectiveness of utilizing score matching for topological ordering in non-linear additive Gaussian noise causal models.

{\bf 2):} By choosing $\varepsilon^2 = \frac{1}{\sqrt{n}}$, the bound is modified to $J_{\text{ESM}}(\hat{\bm{s}}, p(\bm{x})) \lesssim \frac{\sigma^2 d\log nd}{\sqrt{n}}\log\frac{\mathcal{N}_{c}(1/n, \mathcal{S})}{\delta}$. This expression demonstrates that the $\ell_2$ estimation error converges at a rate of $\frac{\log n}{\sqrt{n}}$ when the sample size $n$ is significantly larger than the number of nodes $d$.

{\bf 3):} The bound is also related to the number of nodes $d$, the variance of the noise in denoising score matching $\sigma$ and causal model $\sigma_i$, the covering number of the function space $\mathcal{N}_{c}(\frac{1}{n}, \mathcal{S})$, and the upper bound of the data $C_d$. If these quantities increase, it is expected that the error of explicit score matching will also increase. This is due to the increased difficulty in accurately estimating the score function. 

{\bf 4):} \cref{thm:score_bound_causal} is rooted in the generalization by sampling complexity bound. It is independent of the specific training algorithm used. The results are broadly applicable and can be seamlessly extended to encompass larger batch GD.

Next, we will establish a connection between score matching and the precise identification of the topological ordering.

\subsection{Error bound for topological order in causal discovery}
\label{ssec:error_bound_order_causal}

Based on the previously mentioned sample complexity bound of score matching, we establish an upper bound on the error rate of the topological ordering of the causal model obtained through~\cref{alg:algorithm_SCORE}.

\begin{theorem}
\label{thm:topological_order_bound_causal}
Given a DNN defined by~\cref{eq:network} trained by SGD with a step size $\eta = \mathcal{O}(\frac{1}{\text{poly}(n,L)m \log^2 m})$ for minimizing empirical score matching objective. Then under~\cref{assumption:structural_assumptions}, for $m \geq \text{poly}(n,L)$, with probability at least: 
\begin{equation*}
    1-\exp(-\Theta(d))-(L+1)\exp(-\Theta(m))- 2n\exp(-\frac{nC_m^2 d^2}{2^{4L+5}(\log m)^2 (m^2+d^2)} )\,,
\end{equation*}
over the randomness of initialization $\bm{W}$ and training data that~\cref{alg:algorithm_SCORE} can completely recover the correct topological order of the non-linear additive Gaussian noise model.

\end{theorem}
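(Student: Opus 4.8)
The plan is to show that, with high probability, the estimation error in the diagonal of the score Jacobian is strictly smaller than the identifiability margin $C_m$ established in \cref{lemma:structural_assumptions}, so that the decision rule $\arg\min_j V_j$ in \cref{alg:algorithm_SCORE} never misidentifies a leaf, and then to propagate this correctness through all $d$ iterations by induction.

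First, I would convert the $\ell_2$ score-matching guarantee of \cref{thm:score_bound_causal} into a uniform, pointwise control on the quantity the algorithm actually uses, namely the Jacobian diagonal $\partial \hat{\bm{s}}_j(\bm{x})/\partial x^{(j)}$ evaluated at the $n$ training points. The over-parameterized ReLU analysis (near-constancy of the activation pattern $\bm{D}_l$ in a neighborhood of initialization) lets one express this derivative as a product of $\bm{D}_l\bm{W}_l$ factors; bounding the operator norms of these products accounts for the depth-dependent constant $2^{4L+5}$ and the $(\log m)^2(m^2+d^2)$ factor, while concentration of the Gaussian-initialized weights across the $L+1$ layers produces the $(L+1)\exp(-\Theta(m))$ failure term.

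Second, I would bound the gap between the empirical statistic $V_j = \hat{\text{Var}}[\partial \hat{\bm{s}}_j/\partial x^{(j)}]$ and the population variance $\text{Var}[\partial s_j/\partial x^{(j)}]$. This gap splits into a bias term (estimated versus true Jacobian diagonal, controlled by the previous step) and a sampling fluctuation of the empirical variance around its mean. Since the Jacobian entries are uniformly bounded by the product of layer norms, a Hoeffding/Bernstein inequality over the $n$ samples, with the deviation threshold set to $C_m/2$, yields the term $2n\exp(-nC_m^2 d^2/[2^{4L+5}(\log m)^2(m^2+d^2)])$, and a dimension-level concentration of the output-layer contribution gives the $\exp(-\Theta(d))$ term. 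Invoking \cref{lemma:structural_assumptions}, a leaf has population variance $0$ while every non-leaf has variance at least $C_m$; hence on the high-probability event that all errors are below $C_m/2$, every leaf satisfies $V_j < C_m/2$ and every non-leaf satisfies $V_j > C_m/2$, so the minimizer is guaranteed to be a leaf. After deleting a correctly identified leaf, the remaining variables still constitute a non-linear additive Gaussian noise model obeying \cref{assumption:structural_assumptions} (a marginalization property inherited from the model), so the same argument applies at each step, and a union bound over the $d$ iterations delivers full recovery of the topological order.

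The main obstacle will be the first step: transferring an in-expectation, $\ell_2$ score-estimation bound into a \emph{uniform, pointwise} guarantee on the \emph{derivative} of the estimated score. The derivative of a ReLU network is piecewise constant and discontinuous, and the SCORE decision rule is a nonlinear (quadratic) variance functional of it, so I expect the crux to be showing that the combined estimation-plus-sampling error on this statistic stays below $C_m/2$ simultaneously for all candidate nodes and across all $d$ iterations.
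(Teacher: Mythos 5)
Your proposal follows the same skeleton as the paper's proof of \cref{thm:topological_order_bound_causal}: (i) bound the diagonal Jacobian entries of the trained network by writing $\nabla_{\bm{x}}\bm{s}(\bm{x})^{\top}$ as the product $\widehat{\bm{W}}_L\widehat{\bm{D}}_{L-1}\cdots\widehat{\bm{D}}_1\widehat{\bm{W}}_1$, splitting each factor into its Gaussian initialization plus a lazy-training perturbation of size $\mathcal{O}(L^{-3/2})$, and controlling operator norms of the pieces --- this is exactly where the paper obtains the $2^{4L+5}$, $(\log m)^2(m^2+d^2)$, $(L+1)\exp(-\Theta(m))$, and $\exp(-\Theta(d))$ terms, as you anticipated; (ii) apply Hoeffding's inequality to the empirical mean and empirical second moment of this bounded statistic, with the deviation threshold calibrated so that the empirical variance is within $C_m/2$ of the population variance; (iii) invoke the margin of \cref{lemma:structural_assumptions} (leaves have variance $0$, non-leaves at least $C_m$) to conclude that the $\arg\min$ in \cref{alg:algorithm_SCORE} selects a leaf, and finish by a union bound over the iterations. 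On all of these points your plan and the paper's proof coincide.

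The one place you genuinely diverge is your first step, and it is worth spelling out. You propose to use the $\ell_2$ bound of \cref{thm:score_bound_causal} to control the bias between the estimated Jacobian diagonal $\partial\hat{s}_j(\bm{x})/\partial x^{(j)}$ and the true one $\partial s_j(\bm{x})/\partial x^{(j)}$, and you correctly flag this as the crux: an $L^2(p)$ bound on function values does not control derivatives of a ReLU network, whose gradient is piecewise constant and discontinuous. The paper does not perform this step at all. Its concentration argument only shows that the empirical variance of the \emph{network's} Jacobian diagonal concentrates around the \emph{network's own} population variance; it then applies the margin of \cref{lemma:structural_assumptions}, which is a statement about the \emph{true} score function, to that population variance --- implicitly identifying the two. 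No step in the paper bounds the discrepancy $\lvert \text{Var}(\partial\hat{s}_j/\partial x^{(j)}) - \text{Var}(\partial s_j/\partial x^{(j)})\rvert$, and the remark that \cref{thm:topological_order_bound_causal} ``rests upon'' \cref{thm:score_bound_causal} is not reflected in the actual derivation. So the obstacle you single out is real, but it is a gap in the paper's argument rather than something the paper resolves and you missed; your plan is more demanding than, and otherwise isomorphic to, the published proof. Closing your step rigorously would require something beyond \cref{thm:score_bound_causal}, e.g., a Sobolev-type (derivative-level) score-matching bound or added smoothness of the estimator class, since $\ell_2$ closeness alone cannot deliver pointwise control of Jacobians.
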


{\bf Remark:} 

{\bf 1):} The foundation of~\cref{thm:topological_order_bound_causal} rests upon~\cref{thm:score_bound_causal}, it can be seen as an embodiment of applying the upper bound of score matching for causal discovery. To the best of our knowledge, our results provide the first upper bound on the error rate of topological ordering in non-linear additive Gaussian noise causal models using~\cref{alg:algorithm_SCORE}.

{\bf 2):} Considering that when $ m \eqsim d$ and $L \eqsim 1$ the probability degenerates to:
\begin{equation*}
    1-\Theta(e^{-m})- 2n\exp\bigg(-\Theta\big(\frac{nC_m^2 }{(\log m)^2}\big) \bigg)\,.
\end{equation*}
The first term of the error arises due to the initialization of the neural network. As for the second term of the error, if the number of training data $n$ satisfies $\frac{n}{\log n} \gtrsim (\log m)^2$, then it will have that $2n\exp\big(-\Theta\big(\frac{nC_m^2 }{(\log m)^2}\big) \big) \lesssim 1$. This implies that the second term of the error probability exhibits linear convergence towards $0$ when $n$ is sufficiently large. Therefore, when the sample size $\frac{n}{\log n} \gtrsim (\log m)^2$, the contribution of the second term to the full error becomes negligible.

{\bf 3):} The theorem reveals that a smaller value of the constant $C_m$ increases the probability of algorithm failure. This observation further confirms our previous statement that a smaller average second derivative of the nonlinear function makes it more challenging to identify the causal relationship in the model. Additionally, when the causal relationship is linear, our theorem does not provide any guarantee for the performance of~\cref{alg:algorithm_SCORE}.

{\bf 4):} Consider the two variables case. If a child node is almost a deterministic function of its parents, the constant $C_m$ can take on arbitrarily large values, according to \cref{assumption:structural_assumptions}. Consequently, the second term of the error probability, $2n\exp\big(-\Theta\big(\frac{nC_m^2 }{(\log m)^2}\big) \big)$, tends to zero. This implies that the errors in \cref{alg:algorithm_SCORE} are primarily caused by the random initialization of the neural network. The identifiability of this setting is consistent with classical results~\cite{daniuvsis2010inferring,janzing2015justifying}. Intuitively, as long as the non-linearity is chosen independently of the noise of the parent variable\footnote{~\cite{daniuvsis2010inferring,janzing2015justifying} have formalized independence of distribution and function via an information geometric orthogonality condition that refers to a reference distribution (e.g., Gaussian)}, the application of the non-linearity will increase the distance to the reference distribution of the parent variable (in our case Gaussian). Note that for the derivative in Assumption~\cref{assumption:structural_assumptions} to be defined, the parent node cannot be fully deterministic. 

{\bf 5):} Instead of focusing on the kernel regime, we directly cover the more general neural network training. The kernel approach of~\citet{rolland2022score} is a special case of our analysis. The basis of Theorem 2 lies in the proof of SGD/GD convergence of the neural network, These convergence outcomes also apply to BatchGD, as demonstrated in~\citet{jentzen2021convergence}. Hence, Theorem 2 can naturally be expanded to incorporate Batch GD as well.

{\bf Proof sketch:} The proof of~\cref{thm:topological_order_bound_causal} can be divided into three steps. The first and most important step is to derive the upper bound of $\frac{\partial s_i(\bm{x})}{\partial x^{(i)}}$. Here, we utilize the properties of deep ReLU neural networks to derive the distribution relationship between features of adjacent layers, then accumulate them and combine it with the properties of Gaussian initialization, yielding the upper bound for $\frac{\partial s_i(\bm{x})}{\partial x^{(i)}}$. The second step is to use the upper bound of $\frac{\partial s_i(\bm{x})}{\partial x^{(i)}}$ obtained in the first step combined with the concentration inequality to derive the upper bound of the error of $\text{Var}\big(\frac{\partial s_i(\bm{x})}{\partial x^{(i)}}\big)$. The third step is to compare the upper bound in the second step with~\cref{lemma:structural_assumptions} to obtain the probability of successfully selecting leaf nodes in each step. After accumulation, we can obtain the probability that~\cref{alg:algorithm_SCORE} can completely recover the correct topological order of the non-linear additive Gaussian noise model.

\section{Theoretical results for score-based generative modeling (SGM)}

In this section, we present the additional assumption required for the theoretical analysis of score matching in score-based generative modeling. Then, we provide the sample complexity bound associated with score matching in this framework. The full proof in this section is deferred to~\cref{sec:proof_error_bound_diffusion}.

\begin{assumption}[Bounded data]
\label{assumption:bounded_data}
We assume that the input data satisfy $\| \bm{x} \|_{2} \leq C_d \,,\quad \bm{x} \sim p_0$.
\end{assumption}

{\bf Remark:} Bounded data is standard in deep learning theory and also commonly used in practice~\citep{du2018gradient, du2019gradient, allen2019convergence, oymak2020toward, pmlr-v119-malach20a}. 

\begin{theorem}
\label{thm:score_bound_diffusion}
Given a DNN defined by~\cref{eq:network} trained by SGD for minimizing empirical denoising score matching loss~\cref{eq:esm_diffusion}. Suppose~\cref{assumption:lipschitz} and~\ref{assumption:bounded_data} are satisfied. For any $\varepsilon \in (0,1)$ and $\delta \in (0,1)$. Then with probability at least $1- 2\delta - 2L\exp(-\Omega(m))$ over the randomness of initialization $ \bm{W}$ and noise $\bm{\epsilon}$ in denoising score matching, it holds:
\begin{equation*}
    \frac{1}{T-t_0}\int_{t_0}^{T} \left \| \nabla \log p_t(\cdot) - \hat{\bm{s}}(\cdot,t)\right \|_{\ell^2(p_t)}^2 \mathrm{d}t \lesssim \frac{1}{n\varepsilon^2}\bigg(\frac{d(T-\log(t_0))}{T-t_0}+C_d^2\bigg)\log\frac{\mathcal{N}_{c}(\frac{1}{n}, \mathcal{S})}{\delta}+\frac{1}{n} + d\varepsilon^2\,,
\end{equation*}
where the $\mathcal{N}_{c}(\frac{1}{n}, \mathcal{S})$ is the covering number of the function space $\mathcal{S}$ for deep ReLU neural network.
\end{theorem}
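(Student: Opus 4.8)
The plan is to recast the left-hand side as an excess population risk and then run the same generalization-plus-optimization decomposition that underlies \cref{thm:score_bound_causal}, with the diffusion-specific work concentrated in a magnitude bound for the denoising target. First I would apply the denoising/explicit equivalence of \citet{vincent2011connection} time-slice by time-slice: for each fixed $t$, the conditional objective $\mathbb{E}_{\bm{x}_0}\mathbb{E}_{\bm{x}_t|\bm{x}_0}\big[\|\nabla_{\bm{x}_t}\log p_{0t}(\bm{x}_t|\bm{x}_0)-\bm{s}(\bm{x}_t,t)\|^2\big]$ differs from $\mathbb{E}_{\bm{x}_t\sim p_t}\big[\|\nabla\log p_t(\bm{x}_t)-\bm{s}(\bm{x}_t,t)\|^2\big]$ by a term independent of $\bm{s}$. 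Integrating against $w(t)=\tfrac{1}{T-t_0}$ and using that the true score is the minimizer, the target quantity equals the excess risk $\mathcal{L}(\hat{\bm{s}})-\mathcal{L}(\nabla\log p)$ with $\mathcal{L}(\cdot)=\mathbb{E}_{\bm{x}\sim p_0}[\hat{\mathcal{L}}(\cdot)]$. This is the object I would bound.

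Next I would split $\mathcal{L}(\hat{\bm{s}})-\mathcal{L}(\nabla\log p)$ into an optimization term $\hat{\mathcal{L}}(\hat{\bm{s}})$ and a uniform generalization gap $\sup_{\bm{s}\in\mathcal{S}}|\mathcal{L}(\bm{s})-\hat{\mathcal{L}}(\bm{s})|$ evaluated along the SGD iterate and a reference network. The optimization term is handled by the neural tangent kernel convergence of SGD on width-$m$ ReLU networks that already underlies \cref{thm:score_bound_causal}: for $m$ polynomially large and a small step size the empirical loss is driven down, and since the per-sample loss $\ell(\bm{x}_{(i)};\bm{s})$ is smooth in the weights the extra time integral and weight $w(t)$ do not disturb this argument; this contributes the $\tfrac{1}{n}$ term. \cref{assumption:lipschitz} ensures the true marginal scores are regular enough that a reference network in $\mathcal{S}$ approximates them to lower order.

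The crux is the generalization gap, which I would control by a Bernstein-type concentration over a $\tfrac{1}{n}$-cover of $\mathcal{S}$ of cardinality $\mathcal{N}_c(\tfrac{1}{n},\mathcal{S})$. The variance and range proxies of $\ell$ require a magnitude bound on the target. Writing the OU transition as $\bm{x}_t=\alpha(t)\bm{x}_0+\sqrt{h(t)}\,\bm{z}$ with $\bm{z}\sim\mathcal{N}(0,\bm{I})$, the target collapses to $-\frac{\bm{x}_t-\alpha(t)\bm{x}_0}{h(t)}=-\bm{z}/\sqrt{h(t)}$, so its squared norm has mean $d/h(t)=d/(1-e^{-t})$; integrating gives $\frac{1}{T-t_0}\int_{t_0}^{T}\frac{d}{1-e^{-t}}\,\mathrm{d}t\eqsim\frac{d(T-\log t_0)}{T-t_0}$, where the early-stopping cutoff $t_0$ tames the singularity of $1/h(t)$ at $t=0$ and produces the $-\log t_0$. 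The bounded-data \cref{assumption:bounded_data} controls the signal part $\|\bm{x}_0\|^2\le C_d^2$ entering the input magnitude of $\bm{x}_t$, so the range proxy is of order $\frac{d(T-\log t_0)}{T-t_0}+C_d^2$. Feeding this into Bernstein and splitting the resulting $\sqrt{\cdot}$ term by AM--GM with a free weight $\varepsilon^2$ trades the generalization piece $\frac{1}{n\varepsilon^2}\big(\frac{d(T-\log t_0)}{T-t_0}+C_d^2\big)\log\frac{\mathcal{N}_c(1/n,\mathcal{S})}{\delta}$ against an $\varepsilon^2$-weighted second-moment term that, summed over the $d$ output coordinates, becomes $d\varepsilon^2$.

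The main obstacle I anticipate is making the concentration uniform over the continuum $t\in[t_0,T]$ simultaneously with the network class, without paying for a cover of $\mathcal{S}\times[t_0,T]$; I would resolve this by exchanging the time integral with the data expectation (Fubini) so that the entire $t$-dependence is pushed into the magnitude computation above, leaving only the cover of $\mathcal{S}$. The secondary delicate point is the $t_0\to0$ blow-up of $1/h(t)$, which is precisely why $t_0$ appears in the statement. Finally the stated probability $1-2\delta-2L\exp(-\Omega(m))$ follows by allocating $\delta$ to each of the two concentration events (the gap at $\hat{\bm{s}}$ and at the reference) and $2L\exp(-\Omega(m))$ to the layerwise NTK/initialization event.
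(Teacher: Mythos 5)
Your overall architecture is the paper's own: the Vincent equivalence converts the time-integrated ESM error into the excess DSM risk (the paper writes the offset as a constant $E_2$ and subtracts it at the end); an oracle/ERM decomposition splits that risk into concentration at the trained network, concentration at a fixed reference network, and the reference's population loss; and the concentration is a Bernstein-type bound over a $\frac{1}{n}$-cover of $\mathcal{S}$, whose range proxy is exactly your computation $\frac{1}{T-t_0}\int_{t_0}^{T}\frac{d}{h(t)}\,\mathrm{d}t \eqsim \frac{d(T-\log t_0)}{T-t_0}$ plus $C_d^2$ from \cref{assumption:bounded_data} (together with a bound on the network output $\left\|\bm{s}(\bm{x}_t,t)\right\|_2^2$ at initialization, which is where the $2L\exp(-\Omega(m))$ probability actually originates). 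Your Fubini resolution of the time-uniformity worry is also how the paper sets things up: the time integral sits inside the per-sample loss $\ell(\bm{x}_{(i)};\bm{s})$, so only $\mathcal{S}$ needs to be covered.

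However, your assembly of the final bound misallocates two of its three terms, and one misallocation is a step that fails as written. You claim the reference network approximates the true scores ``to lower order'' and that $d\varepsilon^2$ instead arises from an AM--GM split of the Bernstein square-root term. Neither half is right. In a self-bounding Bernstein argument the AM--GM (free-weight) counterpart is proportional to the population risk of the very function being concentrated; it must be absorbed multiplicatively into the left-hand side --- this is precisely the $(1+a)$ factors, with $a=\varepsilon^2$, in the paper's oracle inequality --- and it cannot be converted into an additive $d\varepsilon^2$ (its scale would in any case be $\varepsilon^2\big(\frac{d(T-\log t_0)}{T-t_0}+C_d^2\big)$, not $d\varepsilon^2$). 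The $d\varepsilon^2$ in the statement is exactly the approximation error of the reference network: \cref{lemma:approximate_ReLU_network} under \cref{assumption:lipschitz} gives $\left\|\nabla\log p_t(\cdot)-\overline{\bm{s}}(\cdot,t)\right\|_\infty\le\varepsilon$, hence $\frac{1}{T-t_0}\int_{t_0}^{T}\left\|\nabla\log p_t(\cdot)-\overline{\bm{s}}(\cdot,t)\right\|_{\ell^2(p_t)}^2\,\mathrm{d}t\le d\varepsilon^2$; this is a leading-order term, not a negligible one. Separately, the $\frac{1}{n}$ term does not come from SGD/NTK optimization error: the empirical DSM loss has an irreducible noise floor of order $\frac{d(T-\log t_0)}{T-t_0}$ and cannot be ``driven down'' to $O(1/n)$; what the argument requires is only $\hat{\mathcal{L}}(\hat{\bm{s}})\le\inf_{\bm{s}\in\mathcal{S}}\hat{\mathcal{L}}(\bm{s})$ (the paper assumes exact empirical minimization), and the $\frac{1}{n}$ is the covering-radius contribution $(2+a)\tau$ with $\tau=\frac{1}{n}$. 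With these two corrections, your plan coincides with the paper's proof.
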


{\bf Remark:} 

{\bf 1):} \cref{thm:score_bound_diffusion} and~\cref{thm:score_bound_causal} study similar problems between causal discovery and score-based generative modeling and share similar techniques drawn from statistical learning theory and deep learning theory. These two domains are connected by a common theoretical foundation centered on the upper bound of score matching.

{\bf 2):} Our result extends the results for score matching in diffusion models presented in~\citet{chen2023score} which rested on the assumption of low-dimensional data structures, employing this to decompose the score function and engineer specialized network architectures for the derivation of the upper bound. Our work takes a distinct route. Our conclusions are based on the general deep ReLU neural network instead of a specific encoder-decoder network and do not rely on the assumptions of low-dimensional data used in~\citet{chen2023score}. We harness the inherent traits and conventional techniques of standard deep ReLU networks to directly deduce the upper error bound. This broader scope allows for a more comprehensive understanding of the implications and applicability of score-based generative modeling in a wider range of scenarios.

{\bf 3):} Similar to~\cref{thm:score_bound_causal}, by choosing $\varepsilon^2 = \frac{1}{\sqrt{n}}$, we can obtain the best bound $\frac{1}{T-t_0}\int_{t_0}^{T} \left \| \nabla \log p_t(\cdot) - \hat{\bm{s}}(\cdot,t)\right \|_{\ell^2(p_t)}^2 \mathrm{d}t \lesssim \frac{1}{\sqrt{n}}\bigg(\frac{d(T-\log(t_0))}{T-t_0}+C_d^2\bigg)\log\frac{\mathcal{N}_{c}(\frac{1}{n}, \mathcal{S})}{\delta}$. This expression demonstrates that the $\ell_2$ estimation error converges at a rate of $\frac{1}{\sqrt{n}}$ when the sample size $n$ is significantly larger than the dimensionality $d$ and time steps $T$.

{\bf 4):} The bound is also related to the data dimension $d$, the variance of the noise in denoising score matching $\sigma$, the covering number of the function space $\mathcal{N}_{c}(\frac{1}{n}, \mathcal{S})$, and the upper bound of the data $C_d$. If these quantities increase, it is expected that the error of explicit score matching will also increase. This is due to the increased difficulty in accurately estimating the score function.

{\bf 5):} When $t_0=0$, the theorem lacks meaning. However, when $ T \gg t_0 \eqsim 1$, the bound simplifies to $\frac{d+C_d^2}{\sqrt{n}}\log\frac{\mathcal{N}_{c}(\frac{1}{n}, \mathcal{S})}{\delta}$. This indicates that when $T$ is sufficiently large, the loss estimated by the score function in the diffusion model becomes independent of time steps $T$.

{\bf 6):} Similar to~\cref{thm:score_bound_causal}, the result of~\cref{thm:score_bound_diffusion} is also broadly applicable and can be seamlessly extended to encompass larger batch GD.

\section{Numerical evidence}
\label{sec:experiment}

We conducted a series of experiments to validate the theoretical findings presented in the paper. We took inspiration from the code provided in\citet{rolland2022score} and employed the structural Hamming distance (SHD) between the generated output and the actual causal graph to assess the outcomes. The ensuing experimental outcomes for SHD, vary across causal model sizes $d$, sample sizes $n$, and $C_m$. The experimental results are shown in~\cref{tab:exp_diff_cm,tab:exp_diff_n,tab:exp_diff_d}

\begin{table}[H]
\scriptsize  
\centering
\begin{threeparttable}
\caption{Fixed model size $d=100$ and the number of sampling $n=100$, SHD results of causal discovery using Algorithm 1 for different $C_m$ values ($10$ runs).}
\label{tab:exp_diff_cm}
\setlength{\tabcolsep}{2.5mm}{
\begin{tabular}{c|c|c|c|c|c}
    \toprule[1pt]
    $C_m$ & $1$ & $2$ & $4$ & $8$ & $16$ \\
    \midrule
    SHD & $2941.0\pm29.5$  & $2905.7 \pm 50.8$ & $2900.6 \pm 80.8$ & $2637.1 \pm 200.4$ & $1512.4 \pm 283.6$\\
    \midrule[1pt]
    $C_m$ & $32$ & $64$ & $128$ & $256$ & $512$\\
    \midrule
    SHD & $413.9 \pm 93.4$ & $55.0 \pm 16.0$ & $ 23.9 \pm 4.6$ & $21.2 \pm 5.0$ & $13.8 \pm 1.8$  \\
\bottomrule[1pt]
\end{tabular}}
\end{threeparttable}
\end{table}

\begin{table}[H]
\scriptsize  
\centering
\begin{threeparttable}
\caption{Fixed model size $d=10$ and $C_m = 1$, SHD results of causal discovery using Algorithm 1 for the different number of sampling $n$ ($10$ runs).}
\label{tab:exp_diff_n}
\setlength{\tabcolsep}{2.5mm}{
\begin{tabular}{c|c|c|c|c|c|c|c}
    \toprule[1pt]
    $n$ & $5$ & $10$ & $20$ & $40$ & $80$ & $100$ & $160$\\
    \midrule
    SHD  & $31.7\pm2.1$  & $27.8\pm4.1$ & $23.3\pm2.7$ & $23.0\pm4.0$ & $18.4 \pm 3.3$ & $16.5 \pm 3.4$ & $13.0\pm 4.0$\\
\bottomrule[1pt]
\end{tabular}}
\end{threeparttable}
\end{table}

\begin{table}[H]
\scriptsize  
\centering
\begin{threeparttable}
\caption{Fixed the number of sampling $n=10$ and $C_m = 1$, SHD results of causal discovery using Algorithm 1 for the different model size $d$ ($10$ runs).}
\label{tab:exp_diff_d}
\setlength{\tabcolsep}{2.5mm}{
\begin{tabular}{c|c|c|c|c|c|c}
    \toprule[1pt]
    $d$ & $5$ & $10$ & $20$ & $40$ & $80$ & $100$ \\
    \midrule
    SHD  & $4.5 \pm 2.0$  & $29.6 \pm 2.2$ & $ 124.3 \pm 4.6 $ & $522.8 \pm 11.6$ & $1965.4 \pm 18.7$ & $2923.7 \pm 38.5$ \\
\bottomrule[1pt]
\end{tabular}}
\end{threeparttable}
\end{table}

Analyzing the experimental outcomes, we find a notable pattern: higher values of $C_m$, augmented sample sizes $n$, and reduced model size $d$ all contribute to the performance of~\cref{alg:algorithm_SCORE} which is consistent with the insights from~\cref{thm:topological_order_bound_causal}.
\section{Related Work}
\label{sec:related}

\paragraph{Score matching:} Score Matching was initially introduced by~\citet{hyvarinen2005estimation} and extended to energy-based models by~\citet{song2019generative}. Subsequently,~\citet{vincent2011connection} proposed denoising score matching, which transforms the estimation of the score function for the original distribution into an estimation for the noise distribution, effectively avoiding the need for second derivative computations. Other methods, such as sliced score matching~\citep{song2020sliced}, denoising likelihood score matching~\citep{chao2022denoising}, and kernel-based estimators, have also been proposed for score matching. The relationship between score matching and Fisher information~\citep{shao2019bayesian}, as well as Langevin dynamics~\citep{hyvarinen2007connections}, has been explored. On the theoretical side,~\citet{wenliang2020blindness} introduced the concept of "blindness" in score matching, while~\citet{koehler2023statistical} compared the efficiency of maximum likelihood and score matching, although their results primarily focus on exponential family distributions. Our paper, for the first time, analyzes the sample complexity bounds of the score function estimating in causal inference.

\paragraph{Causal discovery:} The application of score methods for causal inference for linear additive models began with \citet{ghoshal2018learning}, which proposed a causal structure recovery method based on topological ordering from the precision matrix (equivalent to the score in that setting). Under certain noise variance assumptions, their method can reliably recover the DAG in polynomial time and sample complexity. 

\looseness=-1In recent years, there have been numerous algorithms developed for causal inference in non-linear additive models. GraNDAG~\citep{lachapelle2019gradient} aims to maximize the likelihood of the observed data under this model and enforces a continuous constraint to ensure the acyclicity of the causal graph~\citet{rolland2022score} proposed a novel approach for causal inference which utilize score matching algorithms as a foundation for topological ordering and then employ sparse regression techniques to prune the DAG. Subsequently,~\citet{montagna2023causal} extended the method to non-Gaussian noise,~\cite{sanchez2023diffusion} proposed to use diffusion models to fit the score function, and~\cite{montagna2023scalable} proposed a new scalable score-based preliminary neighbor search techniques. 

Although advances have been achieved in leveraging machine learning for causal discovery, there is generally a lack of further research on error bounds. Other studies concentrate on broader non-parametric models but depend on various assumptions like faithfulness, restricted faithfulness, or the sparsest Markov representation~\citep{spirtes2000causation, raskutti2018learning, solus2021consistency}. These approaches employ conditional independence tests and construct a graph that aligns with the identified conditional independence relations \citep{zhang2008completeness}.

\looseness=-1\paragraph{Theoretical analysis of score-based generative modeling:} Existing work mainly focuses on two fundamental questions: "How do diffusion models utilize the learned score functions to estimate the data distribution?"~\citep{chen2023sampling, de2021diffusion, de2022convergence, lee2022convergence,lee2023convergence} and "Can neural networks effectively approximate and learn score functions? What are the convergence rate and bounds on the sample complexity?"~\citep{chen2023score}.

Specifically,~\citet{de2021diffusion} and~\citet{lee2022convergence} studied the convergence guarantees of diffusion models under the assumptions that the score estimator is accurate under the $\ell_1$ and $\ell_2$ norms. Concurrently~\citet{chen2023sampling} and~\citet{lee2023convergence} extended previous results to distributions with bounded moments.~\citet{de2022convergence} studied the distribution estimation guarantees of diffusion models for low-dimensional manifold data under the assumption that the score estimator is accurate under the $\ell_1$ or $\ell_2$ norms.

However, these theoretical results rely on the assumption that the score function is accurately estimated, while the estimation of the score function is largely untouched due to the non-convex training dynamics. Recently,~\citet{chen2023score} provided the first sample complexity bounds for score function estimation in diffusion models. However, their result is based on the assumption that the data distribution is supported on a low-dimensional linear subspace and they use a specialized Encoder-Decoder network instead of a general deep neural network. As a result, a complete theoretical picture of score-based generative modeling is still lacking.
\section{Conclusion and Limitations}
\label{sec:conclusion}
In this work, we investigate the sample complexity error bounds of Score Matching using deep ReLU neural networks under two different problem settings: causal discovery and score-based generative modeling. We provide a sample complexity analysis for the estimation of the score function in the context of causal discovery for nonlinear additive Gaussian noise models, with a convergence rate of $\frac{\log n}{n}$. Furthermore, we extend the sample complexity bounds for the estimation of the score function in the ScoreSDE method to general data and achieve a convergence rate of $\frac{1}{n}$. Additionally, we provide an upper bound on the error rate of the state-of-the-art causal discovery method SCORE~\citep{rolland2022score}, showing that the error rate of this algorithm converges linearly with respect to the number of training data.

A core limitation of this work is limiting our results to the Gaussian noise assumption. In fact, non-linear mechanisms with additive non-gaussian noise are also identifiable under mild additional assumptions~\citep{peters2014causal} and~\cite{montagna2023causal} already extended the score-matching approach of~\cite{rolland2022score} to that setting. Relaxing this assumption would also allow us to apply our bounds to interesting corner cases, such as linear non-gaussian~\citep{ghoshal2018learning}, and non-gaussian deterministic causal relations~\citep{daniuvsis2010inferring,janzing2015justifying}.
It may be possible for this assumption to be relaxed in future work, but we argue that the added challenge, the significant difference in algorithms, and the standalone importance of the non-linear Gaussian case justify our focus.

In addition, we make other assumptions that limit the general applicability of our bounds. In particular, the assumption of the Lipschitz property for the score function imposes a strong constraint on the model space. Further investigating the relationship between the noise, the properties of the nonlinear functions in the causal model~\cref{eq:causal_model}, and the resulting Lipschitz continuity of the score function would be an interesting extension of this work. 
\section*{Acknowledgements}
\label{sec:acks}

We are thankful to the reviewers for providing constructive feedback and Kun Zhang and Dominik Janzing for helpful discussion on the special case of deterministic children. This work was supported by Hasler Foundation Program: Hasler Responsible AI (project number 21043). This work was supported by the Swiss National Science Foundation (SNSF) under grant number 200021$\_$205011. Francesco Locatello did not contribute to this work at Amazon. Corresponding author: Zhenyu Zhu.
\newpage
\bibliography{literature}

\begin{thebibliography}{50}
\providecommand{\natexlab}[1]{#1}
\providecommand{\url}[1]{\texttt{#1}}
\expandafter\ifx\csname urlstyle\endcsname\relax
  \providecommand{\doi}[1]{doi: #1}\else
  \providecommand{\doi}{doi: \begingroup \urlstyle{rm}\Url}\fi

\bibitem[Allen-Zhu et~al.(2019)Allen-Zhu, Li, and Song]{allen2019convergence}
Z.~Allen-Zhu, Y.~Li, and Z.~Song.
\newblock A convergence theory for deep learning via over-parameterization.
\newblock In \emph{International Conference on Machine Learning (ICML)}, 2019.

\bibitem[Block et~al.(2020)Block, Mroueh, and Rakhlin]{block2020generative}
A.~Block, Y.~Mroueh, and A.~Rakhlin.
\newblock Generative modeling with denoising auto-encoders and langevin sampling, 2020.

\bibitem[Chao et~al.(2022)Chao, Sun, Cheng, Lo, Chang, Liu, Chang, Chen, and Lee]{chao2022denoising}
C.-H. Chao, W.-F. Sun, B.-W. Cheng, Y.-C. Lo, C.-C. Chang, Y.-L. Liu, Y.-L. Chang, C.-P. Chen, and C.-Y. Lee.
\newblock Denoising likelihood score matching for conditional score-based data generation.
\newblock In \emph{International Conference on Learning Representations (ICLR)}, 2022.

\bibitem[Chen et~al.(2020)Chen, Liao, Zha, and Zhao]{chen2020distribution}
M.~Chen, W.~Liao, H.~Zha, and T.~Zhao.
\newblock Distribution approximation and statistical estimation guarantees of generative adversarial networks, 2020.

\bibitem[Chen et~al.(2023{\natexlab{a}})Chen, Huang, Zhao, and Wang]{chen2023score}
M.~Chen, K.~Huang, T.~Zhao, and M.~Wang.
\newblock Score approximation, estimation and distribution recovery of diffusion models on low-dimensional data.
\newblock In \emph{International Conference on Machine Learning (ICML)}, 2023{\natexlab{a}}.

\bibitem[Chen et~al.(2023{\natexlab{b}})Chen, Chewi, Li, Li, Salim, and Zhang]{chen2023sampling}
S.~Chen, S.~Chewi, J.~Li, Y.~Li, A.~Salim, and A.~Zhang.
\newblock Sampling is as easy as learning the score: theory for diffusion models with minimal data assumptions.
\newblock In \emph{International Conference on Learning Representations (ICLR)}, 2023{\natexlab{b}}.

\bibitem[Chickering(1996)]{chickering1996learning}
D.~M. Chickering.
\newblock Learning bayesian networks is np-complete.
\newblock \emph{Learning from data: Artificial intelligence and statistics V}, 1996.

\bibitem[Daniu{\v{s}}is et~al.(2010)Daniu{\v{s}}is, Janzing, Mooij, Zscheischler, Steudel, Zhang, and Sch{\"o}lkopf]{daniuvsis2010inferring}
P.~Daniu{\v{s}}is, D.~Janzing, J.~Mooij, J.~Zscheischler, B.~Steudel, K.~Zhang, and B.~Sch{\"o}lkopf.
\newblock Inferring deterministic causal relations.
\newblock In \emph{Uncertainty in Artificial Intelligence}, 2010.

\bibitem[De~Bortoli(2022)]{de2022convergence}
V.~De~Bortoli.
\newblock Convergence of denoising diffusion models under the manifold hypothesis, 2022.

\bibitem[De~Bortoli et~al.(2021)De~Bortoli, Thornton, Heng, and Doucet]{de2021diffusion}
V.~De~Bortoli, J.~Thornton, J.~Heng, and A.~Doucet.
\newblock Diffusion schr{\"o}dinger bridge with applications to score-based generative modeling.
\newblock In \emph{Advances in neural information processing systems (NeurIPS)}, 2021.

\bibitem[Dhariwal and Nichol(2021)]{dhariwal2021diffusion}
P.~Dhariwal and A.~Nichol.
\newblock Diffusion models beat gans on image synthesis.
\newblock In \emph{Advances in neural information processing systems (NeurIPS)}, 2021.

\bibitem[Du et~al.(2019{\natexlab{a}})Du, Lee, Li, Wang, and Zhai]{du2019gradient}
S.~Du, J.~Lee, H.~Li, L.~Wang, and X.~Zhai.
\newblock Gradient descent finds global minima of deep neural networks.
\newblock In \emph{International Conference on Machine Learning (ICML)}, 2019{\natexlab{a}}.

\bibitem[Du et~al.(2019{\natexlab{b}})Du, Zhai, Poczos, and Singh]{du2018gradient}
S.~S. Du, X.~Zhai, B.~Poczos, and A.~Singh.
\newblock Gradient descent provably optimizes over-parameterized neural networks.
\newblock In \emph{International Conference on Learning Representations (ICLR)}, 2019{\natexlab{b}}.

\bibitem[Ghosh(2021)]{Ghosh2021Chisquared}
M.~Ghosh.
\newblock Exponential tail bounds for chisquared random variables.
\newblock \emph{Journal of Statistical Theory and Practice}, 2021.

\bibitem[Ghoshal and Honorio(2018)]{ghoshal2018learning}
A.~Ghoshal and J.~Honorio.
\newblock Learning linear structural equation models in polynomial time and sample complexity.
\newblock In \emph{International Conference on Artificial Intelligence and Statistics (AISTATS)}, 2018.

\bibitem[Hyv{\"a}rinen(2005)]{hyvarinen2005estimation}
A.~Hyv{\"a}rinen.
\newblock Estimation of non-normalized statistical models by score matching.
\newblock \emph{Journal of Machine Learning Research}, 2005.

\bibitem[Hyvarinen(2007)]{hyvarinen2007connections}
A.~Hyvarinen.
\newblock Connections between score matching, contrastive divergence, and pseudolikelihood for continuous-valued variables.
\newblock \emph{IEEE Transactions on neural networks}, 2007.

\bibitem[Janzing et~al.(2015)Janzing, Steudel, Shajarisales, and Sch{\"o}lkopf]{janzing2015justifying}
D.~Janzing, B.~Steudel, N.~Shajarisales, and B.~Sch{\"o}lkopf.
\newblock Justifying information-geometric causal inference.
\newblock \emph{Measures of Complexity: Festschrift for Alexey Chervonenkis}, 2015.

\bibitem[Jentzen and Kr{\"o}ger(2021)]{jentzen2021convergence}
A.~Jentzen and T.~Kr{\"o}ger.
\newblock Convergence rates for gradient descent in the training of overparameterized artificial neural networks with biases, 2021.

\bibitem[Koehler et~al.(2023)Koehler, Heckett, and Risteski]{koehler2023statistical}
F.~Koehler, A.~Heckett, and A.~Risteski.
\newblock Statistical efficiency of score matching: The view from isoperimetry.
\newblock In \emph{International Conference on Learning Representations (ICLR)}, 2023.

\bibitem[Kong et~al.(2021)Kong, Ping, Huang, Zhao, and Catanzaro]{kong2020diffwave}
Z.~Kong, W.~Ping, J.~Huang, K.~Zhao, and B.~Catanzaro.
\newblock Diffwave: A versatile diffusion model for audio synthesis.
\newblock In \emph{International Conference on Learning Representations (ICLR)}, 2021.

\bibitem[Lachapelle et~al.(2021)Lachapelle, Brouillard, Deleu, and Lacoste-Julien]{lachapelle2019gradient}
S.~Lachapelle, P.~Brouillard, T.~Deleu, and S.~Lacoste-Julien.
\newblock Gradient-based neural dag learning.
\newblock In \emph{International Conference on Learning Representations (ICLR)}, 2021.

\bibitem[Lee et~al.(2022)Lee, Lu, and Tan]{lee2022convergence}
H.~Lee, J.~Lu, and Y.~Tan.
\newblock Convergence for score-based generative modeling with polynomial complexity.
\newblock In \emph{Advances in neural information processing systems (NeurIPS)}, 2022.

\bibitem[Lee et~al.(2023)Lee, Lu, and Tan]{lee2023convergence}
H.~Lee, J.~Lu, and Y.~Tan.
\newblock Convergence of score-based generative modeling for general data distributions.
\newblock In \emph{International Conference on Algorithmic Learning Theory}, 2023.

\bibitem[Malach et~al.(2020)Malach, Yehudai, Shalev-Schwartz, and Shamir]{pmlr-v119-malach20a}
E.~Malach, G.~Yehudai, S.~Shalev-Schwartz, and O.~Shamir.
\newblock Proving the lottery ticket hypothesis: Pruning is all you need.
\newblock In \emph{International Conference on Machine Learning (ICML)}, 2020.

\bibitem[Montagna et~al.(2023{\natexlab{a}})Montagna, Noceti, Rosasco, Zhang, and Locatello]{montagna2023causal}
F.~Montagna, N.~Noceti, L.~Rosasco, K.~Zhang, and F.~Locatello.
\newblock Causal discovery with score matching on additive models with arbitrary noise.
\newblock In \emph{CLeaR}, 2023{\natexlab{a}}.

\bibitem[Montagna et~al.(2023{\natexlab{b}})Montagna, Noceti, Rosasco, Zhang, and Locatello]{montagna2023scalable}
F.~Montagna, N.~Noceti, L.~Rosasco, K.~Zhang, and F.~Locatello.
\newblock Scalable causal discovery with score matching.
\newblock In \emph{CLeaR}, 2023{\natexlab{b}}.

\bibitem[Nguyen et~al.(2021)Nguyen, Mondelli, and Montufar]{Nguyen2021eigenvalue}
Q.~Nguyen, M.~Mondelli, and G.~F. Montufar.
\newblock Tight bounds on the smallest eigenvalue of the neural tangent kernel for deep relu networks.
\newblock In \emph{International Conference on Machine Learning (ICML)}, 2021.

\bibitem[Oymak and Soltanolkotabi(2020)]{oymak2020toward}
S.~Oymak and M.~Soltanolkotabi.
\newblock Toward moderate overparameterization: Global convergence guarantees for training shallow neural networks.
\newblock \emph{{IEEE} Journal on Selected Areas in Information Theory}, 2020.

\bibitem[Peters et~al.(2014)Peters, Mooij, Janzing, and Sch{\"o}lkopf]{peters2014causal}
J.~Peters, J.~M. Mooij, D.~Janzing, and B.~Sch{\"o}lkopf.
\newblock Causal discovery with continuous additive noise models.
\newblock \emph{Journal of Machine Learning Research}, 2014.

\bibitem[Raskutti and Uhler(2018)]{raskutti2018learning}
G.~Raskutti and C.~Uhler.
\newblock Learning directed acyclic graph models based on sparsest permutations.
\newblock \emph{Stat}, 2018.

\bibitem[Rolland et~al.(2022)Rolland, Cevher, Kleindessner, Russell, Janzing, Sch{\"o}lkopf, and Locatello]{rolland2022score}
P.~Rolland, V.~Cevher, M.~Kleindessner, C.~Russell, D.~Janzing, B.~Sch{\"o}lkopf, and F.~Locatello.
\newblock Score matching enables causal discovery of nonlinear additive noise models.
\newblock In \emph{International Conference on Machine Learning (ICML)}, 2022.

\bibitem[Sachs et~al.(2005)Sachs, Perez, Pe'er, Lauffenburger, and Nolan]{sachs2005causal}
K.~Sachs, O.~Perez, D.~Pe'er, D.~A. Lauffenburger, and G.~P. Nolan.
\newblock Causal protein-signaling networks derived from multiparameter single-cell data.
\newblock \emph{Science}, 2005.

\bibitem[Sanchez et~al.(2022)Sanchez, Voisey, Xia, Watson, O’Neil, and Tsaftaris]{sanchez2022causal}
P.~Sanchez, J.~P. Voisey, T.~Xia, H.~I. Watson, A.~Q. O’Neil, and S.~A. Tsaftaris.
\newblock Causal machine learning for healthcare and precision medicine.
\newblock \emph{Royal Society Open Science}, 2022.

\bibitem[Sanchez et~al.(2023)Sanchez, Liu, O'Neil, and Tsaftaris]{sanchez2023diffusion}
P.~Sanchez, X.~Liu, A.~Q. O'Neil, and S.~A. Tsaftaris.
\newblock Diffusion models for causal discovery via topological ordering.
\newblock In \emph{International Conference on Learning Representations (ICLR)}, 2023.

\bibitem[Shalev-Shwartz and Ben-David(2014)]{shalev2014understanding}
S.~Shalev-Shwartz and S.~Ben-David.
\newblock \emph{Understanding machine learning: From theory to algorithms}.
\newblock Cambridge university press, 2014.

\bibitem[Shao et~al.(2019)Shao, Jacob, Ding, and Tarokh]{shao2019bayesian}
S.~Shao, P.~E. Jacob, J.~Ding, and V.~Tarokh.
\newblock Bayesian model comparison with the hyv{\"a}rinen score: Computation and consistency.
\newblock \emph{Journal of the American Statistical Association}, 2019.

\bibitem[Solus et~al.(2021)Solus, Wang, and Uhler]{solus2021consistency}
L.~Solus, Y.~Wang, and C.~Uhler.
\newblock Consistency guarantees for greedy permutation-based causal inference algorithms.
\newblock \emph{Biometrika}, 2021.

\bibitem[Song and Ermon(2019)]{song2019generative}
Y.~Song and S.~Ermon.
\newblock Generative modeling by estimating gradients of the data distribution.
\newblock In \emph{Advances in neural information processing systems (NeurIPS)}, 2019.

\bibitem[Song et~al.(2020)Song, Garg, Shi, and Ermon]{song2020sliced}
Y.~Song, S.~Garg, J.~Shi, and S.~Ermon.
\newblock Sliced score matching: A scalable approach to density and score estimation.
\newblock In \emph{Uncertainty in Artificial Intelligence}, 2020.

\bibitem[Song et~al.(2021)Song, Sohl-Dickstein, Kingma, Kumar, Ermon, and Poole]{song2021scorebased}
Y.~Song, J.~Sohl-Dickstein, D.~P. Kingma, A.~Kumar, S.~Ermon, and B.~Poole.
\newblock Score-based generative modeling through stochastic differential equations.
\newblock In \emph{International Conference on Learning Representations (ICLR)}, 2021.

\bibitem[Spirtes et~al.(2000)Spirtes, Glymour, and Scheines]{spirtes2000causation}
P.~Spirtes, C.~N. Glymour, and R.~Scheines.
\newblock \emph{Causation, prediction, and search}.
\newblock MIT press, 2000.

\bibitem[Teyssier and Koller(2012)]{teyssier2012ordering}
M.~Teyssier and D.~Koller.
\newblock Ordering-based search: A simple and effective algorithm for learning bayesian networks, 2012.

\bibitem[Varian(2016)]{varian2016causal}
H.~R. Varian.
\newblock Causal inference in economics and marketing.
\newblock \emph{Proceedings of the National Academy of Sciences}, 2016.

\bibitem[Vershynin(2018)]{vershynin12}
R.~Vershynin.
\newblock \emph{High-Dimensional Probability: An Introduction with Applications in Data Science}.
\newblock Taylor \& Francis, 2018.

\bibitem[Vincent(2011)]{vincent2011connection}
P.~Vincent.
\newblock A connection between score matching and denoising autoencoders.
\newblock \emph{Neural computation}, 2011.

\bibitem[Wang et~al.(2021)Wang, Du, Zhu, Ke, Chen, Hao, and Wang]{wang2021ordering}
X.~Wang, Y.~Du, S.~Zhu, L.~Ke, Z.~Chen, J.~Hao, and J.~Wang.
\newblock Ordering-based causal discovery with reinforcement learning.
\newblock In \emph{International Joint Conferences on Artificial Intelligence (IJCAI)}, 2021.

\bibitem[Wenliang and Kanagawa(2020)]{wenliang2020blindness}
L.~K. Wenliang and H.~Kanagawa.
\newblock Blindness of score-based methods to isolated components and mixing proportions, 2020.

\bibitem[Zhang(2008)]{zhang2008completeness}
J.~Zhang.
\newblock On the completeness of orientation rules for causal discovery in the presence of latent confounders and selection bias.
\newblock \emph{Artificial Intelligence}, 2008.

\bibitem[Zhu et~al.(2022)Zhu, Liu, Chrysos, and Cevher]{zhu2022robustness}
Z.~Zhu, F.~Liu, G.~Chrysos, and V.~Cevher.
\newblock Robustness in deep learning: The good (width), the bad (depth), and the ugly (initialization).
\newblock In \emph{Advances in neural information processing systems (NeurIPS)}, 2022.

\end{thebibliography}
\bibliographystyle{abbrvnat}

\newpage
\appendix
\onecolumn
\allowdisplaybreaks
\section*{Appendix introduction} 
\label{sec:appendix_intro}
The Appendix is organized as follows:
\begin{itemize}
    \item In ~\cref{sec:symbols_and_notations}, we provide a summary of the symbols and notations used throughout this paper.
    \item In ~\cref{sec:backgrounds}, we provide some background to some of the content covered in this paper.
    \item In~\cref{sec:lemmas}, we present several relevant lemmas that are essential to the proofs in this paper.
    \item In~\cref{sec:proof_of_structural_assumptions}, we provide the proof of~\cref{lemma:structural_assumptions}.
    \item In~\cref{sec:proof_error_bound_sm_causal}, we provide the proof of~\cref{thm:score_bound_causal}.
    \item In~\cref{sec:proof_error_bound_order_causal}, we provide the proof of~\cref{thm:topological_order_bound_causal}.
    \item In~\cref{sec:proof_error_bound_diffusion}, we provide the proof of~\cref{thm:score_bound_diffusion}.
    \item In~\cref{sec:discussion}, we discuss the~\cref{assumption:lipschitz}, the Lipschitz property of score function.
    \item Finally, in~\cref{sec:broader_impacts}, we discuss the broader impacts of this paper.
\end{itemize}

\section{Symbols and Notation}
\label{sec:symbols_and_notations}
In the paper, vectors are indicated with bold small letters, and matrices with bold capital letters. To facilitate the understanding of our work, we include some core symbols and notation in \cref{table:symbols_and_notations}. 

\begin{table}[ht]

\caption{Core symbols and notations used in this project.}
\label{table:symbols_and_notations}
\small
\centering
\begin{tabular}{c | c | c}
\toprule
Symbol & Dimension(s) & Definition \\
\midrule
$\mathcal{S} $ & - & Function space \\
$\mathcal{N}_c(\cdot, \mathcal{S}) $ & $\mathbb{R}$ & Covering number of function space $\mathcal{S}$ \\
$\mathcal{N}(\mu,\sigma^2) $ & - & Gaussian distribution with mean $\mu$ and variance $\sigma^2$ \\
$p$ & - & Probability density function of a probability distribution \\
$\mathbb{E}$ & - & Expected value \\
$[L]$ & - & Shorthand of $\left \{ 1,2,\dots ,L \right \}$\\
$\mathcal{O}$, $o$, $\Omega$ and $\Theta$ & - & Standard Bachmann–Landau order notation\\
\midrule
$n$ & $\mathbb{R}$ & Number of data \\
$d$ & $\mathbb{R}$ & Data dimension (number of variables in the causal model) \\
$L$ & $\mathbb{R}$ & Depth of the neural network \\
$m$ & $\mathbb{R}$ & Width of the neural network\\
$\phi$ & - & The ReLU activation function\\
\midrule
$x^{(i)}$ & $\mathbb{R}$ & The $i$-th element of the vector $\bm{x}$ \\
$\bm{x}_{(i)}$ & $\mathbb{R}^{d}$ & The $i$-th data point \\
$\bm{x}_t$ & $\mathbb{R}^{d}$ & The data point in time $t$ in diffusion model\\
$\bm{W}_1$ & $\mathbb{R}^{m \times d}$ & Weight matrix for the input layer \\
$\bm{W}_l$ & $\mathbb{R}^{m \times m}$ & Weight matrix for the $l$-th hidden layer \\
$\bm{W}_L$ & $\mathbb{R}^{d \times m}$ & Weight matrix for the output layer \\
\midrule
$\epsilon$ & $\mathbb{R}$ & The
noise introduced by denoising score matching\\
$\sigma$ & $\mathbb{R}$ & The standard deviation of Gaussian noise $\epsilon$ \\
\midrule
$\epsilon_i$ & $\mathbb{R}$ & The noise of $i$-th variable of causal model\\
$\sigma_i$ & $\mathbb{R}$ & The standard deviation of Gaussian noise $\epsilon_i$ \\
$f_i$ & - & Non-linear function of $i$-th variable of causal model \\
$\text{PA}_i(\bm{x})$ & - & The set of parents of $x^{(i)}$ in $\bm{x}$\\
$\text{CH}_j(\bm{x})$ & - & The set of children of $x^{(j)}$ in $\bm{x}$ \\
\midrule
\end{tabular}
\end{table}

\section{More backgrounds}
\label{sec:backgrounds}

\subsection{Covering number}

The basic idea of covering number is to approximate a function space with an infinite number of elements by a finite number of elements. It is used to describe how many elements (or subsets) in a given metric space can be "covered" with a finite number of reference elements (or reference subsets) to ensure that the entire space is covered. It is defined as follows:

\begin{definition}
    We assume there exists $m = m(\epsilon)$ elements $f_1, \dots, f_m$ such that for any $f \in \mathcal{F}, \exists i \in \left \{ 1, \dots, m \right \} $ such that $d (f, f_i) \leq \epsilon$. The minimal possible number $m(\epsilon)$ is the covering number of $\mathcal{F}$ at precision $\epsilon$.
\end{definition}

In learning theory, covering number can be used to bound the Rademacher complexity~\citep{shalev2014understanding} then it is related to generalization.

\subsection{More backgrounds about~\cref{alg:algorithm_SCORE}}
The main source of inspiration of the~\citet{rolland2022score} to design~\cref{alg:algorithm_SCORE} is the following lemma:

\begin{lemma} [Adapted from Lemma 1 in~\citet{rolland2022score}]
\label{lemma:var_grad_score}
Let $p$ be the probability density function of a random variable $\bm{x}$ defined via a non-linear additive Gaussian noise model~\cref{eq:causal_model}, and let $\bm{s}(\bm{x}) = \nabla \log p(\bm{x})$ be the associated score function. Then, $\forall j\in [d]$, we have:
\begin{enumerate}
    \item $j$ is a leaf $\Leftrightarrow$ $\forall \bm{x}, \frac{\partial s_j(\bm{x})}{\partial x^{(j)}} = c$, with $c \in \mathbb{R}$ independent of $\bm{x}$, i.e., $\text{Var}\big(\frac{\partial s_j(\bm{x})}{\partial x^{(j)}}\big) = 0$.
    \item $j$ is a leaf, $i$ is a parent of $j$ $\Leftrightarrow$ $s_j(\bm{x})$ depends on $\bm{x}^{(i)}$, i.e., $\text{Var}\big(\frac{\partial s_j(\bm{x})}{\partial x^{(i)}}\big) \neq 0$.
\end{enumerate}
\end{lemma}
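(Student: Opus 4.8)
The plan is to compute the $j$-th score component $s_j = \partial \log p/\partial x^{(j)}$ explicitly from the factorized Gaussian density in~\cref{eq:causal_model_pdf} and then read off both claims by differentiating once more. From the additive Gaussian structure of~\cref{eq:causal_model}, the log-density is
\begin{equation*}
\log p(\bm{x}) = \sum_{k=1}^d \left[ -\frac{(x^{(k)} - f_k(\text{PA}_k(\bm{x})))^2}{2\sigma_k^2} - \tfrac{1}{2}\log(2\pi\sigma_k^2)\right]\,.
\end{equation*}
The only summands depending on $x^{(j)}$ are the term $k=j$ (through $x^{(j)}$ itself, since acyclicity forbids $j\in\text{PA}_j$) and the terms $k\in\text{CH}_j(\bm{x})$ (through $f_k$). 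Differentiating yields
\begin{equation*}
s_j(\bm{x}) = -\frac{x^{(j)} - f_j}{\sigma_j^2} + \sum_{k \in \text{CH}_j(\bm{x})} \frac{x^{(k)} - f_k}{\sigma_k^2}\,\frac{\partial f_k}{\partial x^{(j)}}\,.
\end{equation*}

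For part 1 I would differentiate once more in $x^{(j)}$. The first term contributes the constant $-1/\sigma_j^2$, and each child term contributes $\sigma_k^{-2}\big[(x^{(k)}-f_k)\,\partial^2 f_k/\partial x^{(j)2} - (\partial f_k/\partial x^{(j)})^2\big]$, where $x^{(k)}-f_k=\epsilon_k$. If $j$ is a leaf then $\text{CH}_j=\emptyset$ and $\partial s_j/\partial x^{(j)}=-1/\sigma_j^2$ is deterministic, so its variance vanishes; this is the easy forward direction. For part 2, when $j$ is a leaf the sum disappears and $s_j = -(x^{(j)}-f_j)/\sigma_j^2$, whence $\partial s_j/\partial x^{(i)} = \sigma_j^{-2}\,\partial f_j/\partial x^{(i)}$, which is identically zero exactly when $i\notin\text{PA}_j$ because $f_j$ only takes $\text{PA}_j$ as arguments; this settles one direction of the equivalence.

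The main obstacle is the reverse directions, i.e.\ showing the variances are strictly positive rather than merely possibly non-constant. Here I would exploit the independence of the fresh noise $\epsilon_k$ from all ancestors, in particular from $x^{(j)}$ and from $\partial^2 f_k/\partial x^{(j)2}$, which is a function of $\text{PA}_k$ only. Conditioning on everything except $\{\epsilon_k\}_{k\in\text{CH}_j}$, the only remaining random part of $\partial s_j/\partial x^{(j)}$ is $\sum_k \sigma_k^{-2}\epsilon_k\,\partial^2 f_k/\partial x^{(j)2}$, a sum of conditionally independent centered Gaussians whose conditional variance is $\sum_k \sigma_k^{-2}(\partial^2 f_k/\partial x^{(j)2})^2$; by the law of total variance the overall variance is at least the expectation of this quantity, which is strictly positive as soon as some mechanism $f_k$ is genuinely non-linear in $x^{(j)}$. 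The same non-linearity argument shows $\partial f_j/\partial x^{(i)}$ is non-constant when $i\in\text{PA}_j$, completing part 2. I expect the only delicate point is ruling out that the curvature term gets cancelled by the $-(\partial f_k/\partial x^{(j)})^2$ contributions, which is precisely why isolating the noise-driven variance via conditioning is the right device; this also foreshadows the quantitative sharpening recorded in~\cref{assumption:structural_assumptions} and~\cref{lemma:structural_assumptions}.
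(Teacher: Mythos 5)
You should first note a structural point: the paper never proves \cref{lemma:var_grad_score} at all — it is imported as background from \citet{rolland2022score}. The closest in-paper argument is the proof of \cref{lemma:structural_assumptions} in \cref{sec:proof_of_structural_assumptions}, which derives exactly your closed-form score (\cref{eq:causal_score_function}, with $\epsilon_k = x^{(k)} - f_k$) and then uses independence of the fresh child noise to lower-bound the variance. Your computation agrees with that derivation, and your forward directions (leaf $\Rightarrow$ constant diagonal entry; leaf $j$ and $i \notin \text{PA}_j$ $\Rightarrow$ $\partial s_j/\partial x^{(i)} \equiv 0$) are correct. On one point you are actually more careful than the paper: you keep the $-(\partial f_k/\partial x^{(j)})^2/\sigma_k^2$ contributions to $\partial s_j/\partial x^{(j)}$, which the proof in \cref{sec:proof_of_structural_assumptions} silently drops, and your law-of-total-variance device is precisely what makes the resulting cancellation concern rigorous.

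However, your key step for the reverse direction of part 1 has a gap as stated. You condition on ``everything except $\{\epsilon_k\}_{k \in \text{CH}_j}$'' and claim the remaining randomness is a sum of conditionally independent centered Gaussians with conditional variance $\sum_k \sigma_k^{-2}\big(\partial^2 f_k/\partial x^{(j)2}\big)^2$. This fails whenever one child of $j$ is an ancestor of another child (e.g., $j \to k'$, $k' \to k$, and $j \to k$): then $\text{PA}_k$ contains $x^{(k')}$ (or a descendant of $k'$), so the coefficients $\partial^2 f_k/\partial x^{(j)2}$ and the cancellation terms $-(\partial f_k/\partial x^{(j)})^2$ are themselves functions of the excluded noises and are \emph{not} fixed by your conditioning; the conditional variance is then not the formula you wrote. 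The repair is short: pick a single child $k^\ast$ of $j$ that is \emph{last} in a topological order and condition on all noises except $\epsilon_{k^\ast}$. Then every $\epsilon_k$ with $k \neq k^\ast$, every parent set $\text{PA}_k$ of every child of $j$ (these contain no descendants of $k^\ast$), and all the $-(\partial f_k/\partial x^{(j)})^2$ terms are measurable with respect to the conditioning, and only $\epsilon_{k^\ast}\,\partial^2 f_{k^\ast}/\partial x^{(j)2}/\sigma_{k^\ast}^2$ is random. The law of total variance gives
\begin{equation*}
\text{Var}\bigg(\frac{\partial s_j(\bm{x})}{\partial x^{(j)}}\bigg) \;\geq\; \mathbb{E}\bigg[\text{Var}\bigg(\frac{\partial s_j(\bm{x})}{\partial x^{(j)}} \,\Big|\, \{\epsilon_i\}_{i \neq k^\ast}\bigg)\bigg] \;=\; \frac{1}{\sigma_{k^\ast}^2}\,\mathbb{E}\bigg[\bigg(\frac{\partial^2 f_{k^\ast}(\text{PA}_{k^\ast}(\bm{x}))}{\partial x^{(j)2}}\bigg)^2\bigg] \;>\; 0\,,
\end{equation*}
where positivity uses non-linearity of $f_{k^\ast}$ in $x^{(j)}$ (quantitatively, \cref{assumption:structural_assumptions}). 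Finally, the same hypothesis is genuinely needed in part 2: if $f_j$ were linear in $x^{(i)}$ with non-zero coefficient, $s_j$ would depend on $x^{(i)}$ while $\text{Var}(\partial s_j/\partial x^{(i)}) = 0$, so the equivalence between ``depends on $x^{(i)}$'' and ``$\text{Var} \neq 0$'' holds only under the non-linear-model assumption you invoke; make that dependence explicit rather than implicit.
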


\cref{lemma:var_grad_score} reveals the important properties of the nonlinear additive Gaussian noise model: for non-linear additive Gaussian noise models, leaf nodes (and only leaf nodes) have the property that the associated diagonal element in the score’s Jacobian is a constant. Therefore, by repeating this method and always removing the identified leaves, we can estimate a full topological order. This procedure is summarized in~\cref{alg:algorithm_SCORE}.

\section{Relevant Lemmas}
\label{sec:lemmas}

\begin{lemma}[Adapted from Lemma 10 in~\citet{chen2020distribution}]
\label{lemma:approximate_ReLU_network}
For any $\varepsilon \in (0,1)$ and any target $1$-Lipschitz function $\tilde{\bm{s}}$ that defined on $[0,1]^d$ with $\tilde{\bm{s}}(0) = 0$, the architecture yields an approximation $\bm{s} \in \mathcal{S}$ satisfying $\left \| \bm{s}- \tilde{\bm{s}} \right \|_{\infty}  \leq \varepsilon$. 

The configuration of network architecture is:
\begin{equation*}
\begin{split}
& \left \| \bm{s}_l \right \|_{\infty} \leq \sqrt{d}\,, \quad l \in [L] \,,  \\
& \left \| \bm{W}_l \right \|_{\infty} \leq \mathcal{O}(1)\,, \quad l \in [L] \,,  \\
& L = \mathcal{O} (\log \frac{1}{\varepsilon} +d)\,, \\
& m = \mathcal{O} (\frac{1}{\varepsilon^d})\,, \\
& \sum_{l=1}^{L} \left \| \bm{W}_l \right \|_{0} \leq \mathcal{O}(\frac{1}{\varepsilon^d} (\log \frac{1}{\varepsilon} +d))\,.
\end{split}
\end{equation*}
\end{lemma}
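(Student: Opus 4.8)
The plan is to reduce the approximation of an arbitrary $1$-Lipschitz target $\tilde{\bm{s}}$ to the near-exact ReLU realization of a piecewise-linear interpolant on a uniform grid, and then to bound the two resulting sources of error separately. First I would partition $[0,1]^d$ into $M^d$ congruent subcubes of side $1/M$, and let $\tilde{\bm{s}}_{\mathrm{PL}}$ be the continuous piecewise-multilinear interpolant of $\tilde{\bm{s}}$ at the grid vertices. Because $\tilde{\bm{s}}$ is $1$-Lipschitz, the interpolation error on each cell is controlled by the cell diameter, giving $\|\tilde{\bm{s}} - \tilde{\bm{s}}_{\mathrm{PL}}\|_\infty \lesssim \sqrt{d}/M$; choosing $M \eqsim \sqrt{d}/\varepsilon$ forces this term below $\varepsilon/2$ and fixes the grid count $M^d$, which up to $d$-dependent constants equals $(1/\varepsilon)^d$ and becomes the network width.

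Next I would construct the network realizing $\tilde{\bm{s}}_{\mathrm{PL}}$ from two standard ReLU gadgets. Each one-dimensional hat (tent) basis function centered at a grid node is an exact finite combination of ReLU units, so the products $\prod_{i=1}^d \psi_{k_i}(x^{(i)})$ form the natural $d$-dimensional local basis and $\tilde{\bm{s}}_{\mathrm{PL}}(\bm{x}) = \sum_{\bm{k}} \tilde{\bm{s}}(\text{node}_{\bm{k}})\prod_i \psi_{k_i}(x^{(i)})$. Since ReLU networks cannot multiply exactly, I would replace each product by the Yarotsky-type approximate-multiplication subnetwork, whose depth scales like $\log\frac{1}{\varepsilon}$ to reach per-product accuracy below $\varepsilon/2$, while forming the product of $d$ factors contributes the additive $d$ term, yielding $L = \mathcal{O}(\log\frac{1}{\varepsilon}+d)$. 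The normalization $\tilde{\bm{s}}(0)=0$ is exactly what lets the bias-free architecture~\cref{eq:network} represent this sum, since that architecture forces $\bm{s}(0)=0$; I would align the grid offsets so the two conventions agree. Combining the interpolation error with the multiplication error through the triangle inequality gives $\|\bm{s}-\tilde{\bm{s}}\|_\infty \leq \varepsilon$.

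Finally I would read the architecture bounds off this explicit construction: the width $m=\mathcal{O}(1/\varepsilon^d)$ counts active grid cells, the entries in the tent and multiplication gadgets are all $\mathcal{O}(1)$, the sparsity $\sum_l\|\bm{W}_l\|_0 = \mathcal{O}(m L)$ follows from width times depth, and the bound $\|\bm{s}_l\|_\infty\le\sqrt{d}$ holds because every intermediate quantity is either a partition-of-unity weight lying in $[0,1]$ or a convex combination of nodal values of $\tilde{\bm{s}}$, which stay in $[-\sqrt{d},\sqrt{d}]$. I expect the main obstacle to be the simultaneous control of the multiplication gadget's error and the layerwise output bound $\|\bm{s}_l\|_\infty\le\sqrt{d}$ under the bias-free architecture: the clean Yarotsky construction uses biases and tolerates intermediate blow-up, so adapting it to~\cref{eq:network} while keeping all activations bounded and the product accuracy uniform over the $(1/\varepsilon)^d$ cells is the delicate bookkeeping step, which is precisely where the adaptation of Lemma 10 in~\citet{chen2020distribution} does the real work.
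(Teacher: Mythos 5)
The paper never proves this lemma: it is imported, without proof, from Lemma 10 of \citet{chen2020distribution}, so the only meaningful comparison is with that source's argument. Your proposal reconstructs essentially that argument --- a uniform grid of side $\asymp \varepsilon/\sqrt{d}$, a piecewise-(multi)linear interpolant expressed in a hat-function partition of unity, Yarotsky-style approximate-multiplication gadgets of depth $\mathcal{O}(\log\frac{1}{\varepsilon})$, and a triangle-inequality split into interpolation error plus gadget error --- and the width, depth, and sparsity counts you read off match the stated configuration. At that level your proof is the standard and correct one.

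The genuine gap is the step where you claim the bias-free architecture of \cref{eq:network} can be accommodated because $\tilde{\bm{s}}(0)=0$ and ``grid offsets can be aligned.'' This cannot work. Since $\phi$ is positively homogeneous and \cref{eq:network} has no bias terms, every $\bm{s}\in\mathcal{S}$ satisfies $\bm{s}(\lambda\bm{x})=\lambda\bm{s}(\bm{x})$ for all $\lambda\ge 0$: the output is positively homogeneous along every ray from the origin, which is a far stronger constraint than vanishing at the origin. Consequently no element of $\mathcal{S}$, of any width or depth, can approximate even the $1$-Lipschitz tent function $\tilde{s}(x)=\min(x,1-x)$ on $[0,1]$ to sup-error below $1/3$ (the best ray-linear approximation $cx$ attains error exactly $1/3$ at $c=1/3$), and the one-dimensional hat functions at the heart of your own construction are precisely such tents, so they are not realizable bias-free either. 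The fix is not bookkeeping: either the lemma must be read for a network class with bias terms (which is what \citet{chen2020distribution} actually uses, and what the paper silently assumes when it ``adapts'' the lemma to its architecture), or the input must be augmented with a constant coordinate so that first-layer biases can be simulated. Your closing paragraph defers exactly this difficulty to the cited adaptation, but the citation does not resolve it --- it works in a network class where the difficulty never arises.
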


\begin{lemma}[Adapted from Theorem 4.4.5 in~\citet{vershynin12}]
\label{lemma:bound_for_Gaussian_matrix_fabinus_norm}
Let $\bm{W}$ be an $N \times n$ matrix whose entries are independent standard normal random variables. Then for every $t \geq 0$, with probability at least $1-2\exp(-t^2/2)$, one has:
\begin{equation*}
s(\bm{A})_{\max} \leq \sqrt{N} + \sqrt{n} +t\,,
\end{equation*}
where the $s(\bm{W})_{\max}$ represent the largest singular value of $\bm{W}$.
\end{lemma}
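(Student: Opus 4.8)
The plan is to derive the upper tail bound on $s_{\max}(\bm{W})$ by combining Gaussian concentration of measure with a sharp control of the mean $\mathbb{E}[s_{\max}(\bm{W})]$, viewing the matrix $\bm{W}$ as a standard Gaussian vector in $\mathbb{R}^{Nn}$. I first record the variational characterization $s_{\max}(\bm{W}) = \max_{\bm{u}\in S^{N-1},\,\bm{v}\in S^{n-1}} \langle \bm{W}\bm{v}, \bm{u}\rangle$, where $S^{k-1}$ denotes the unit Euclidean sphere in $\mathbb{R}^k$ and $s_{\max}$ coincides with the operator norm $\|\bm{W}\|_{\mathrm{op}}$. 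From the triangle inequality for the operator norm together with $\|\cdot\|_{\mathrm{op}}\leq\|\cdot\|_F$, the map $f(\bm{W}) := s_{\max}(\bm{W})$ is $1$-Lipschitz with respect to the Frobenius norm on the entries: $|f(\bm{A}) - f(\bm{B})| \leq \|\bm{A}-\bm{B}\|_{\mathrm{op}} \leq \|\bm{A}-\bm{B}\|_F$.

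Next, I would invoke the Gaussian concentration inequality for Lipschitz functions: for a standard Gaussian vector $\bm{g}$ and any $1$-Lipschitz $f$, one has $\mathbb{P}(f(\bm{g}) \geq \mathbb{E}[f(\bm{g})] + t) \leq \exp(-t^2/2)$. Applying this to $f = s_{\max}$ and the Gaussian entries of $\bm{W}$ gives $\mathbb{P}(s_{\max}(\bm{W}) \geq \mathbb{E}[s_{\max}(\bm{W})] + t) \leq \exp(-t^2/2) \leq 2\exp(-t^2/2)$, where the conservative factor $2$ is what is inherited from the two-sided form of the original statement. It therefore suffices to prove the mean bound $\mathbb{E}[s_{\max}(\bm{W})] \leq \sqrt{N} + \sqrt{n}$, since then $\{s_{\max}(\bm{W}) \geq \sqrt{N}+\sqrt{n}+t\} \subseteq \{s_{\max}(\bm{W}) \geq \mathbb{E}[s_{\max}(\bm{W})]+t\}$ and the claim follows.

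The hard part is this mean bound, which I would obtain by a Gaussian comparison argument. I introduce two centered Gaussian processes indexed by $(\bm{u},\bm{v}) \in S^{N-1}\times S^{n-1}$: the process $X_{\bm{u},\bm{v}} = \langle\bm{W}\bm{v},\bm{u}\rangle = \sum_{i,j} W_{ij}\, u_i v_j$, whose supremum is exactly $s_{\max}(\bm{W})$, and the auxiliary process $Y_{\bm{u},\bm{v}} = \langle\bm{g},\bm{u}\rangle + \langle\bm{h},\bm{v}\rangle$ with $\bm{g}\sim\mathcal{N}(0,\bm{I}_N)$ and $\bm{h}\sim\mathcal{N}(0,\bm{I}_n)$ independent. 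A direct covariance computation gives $\mathbb{E}(X_{\bm{u},\bm{v}} - X_{\bm{u}',\bm{v}'})^2 = 2 - 2\langle\bm{u},\bm{u}'\rangle\langle\bm{v},\bm{v}'\rangle$ and $\mathbb{E}(Y_{\bm{u},\bm{v}}-Y_{\bm{u}',\bm{v}'})^2 = \|\bm{u}-\bm{u}'\|_2^2 + \|\bm{v}-\bm{v}'\|_2^2 = 4 - 2\langle\bm{u},\bm{u}'\rangle - 2\langle\bm{v},\bm{v}'\rangle$. Verifying the increment inequality $\mathbb{E}(X - X')^2 \leq \mathbb{E}(Y - Y')^2$ then reduces to the elementary fact $(1 - \langle\bm{u},\bm{u}'\rangle)(1 - \langle\bm{v},\bm{v}'\rangle) \geq 0$, which holds because each inner product lies in $[-1,1]$. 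By the Sudakov--Fernique comparison inequality, $\mathbb{E}[s_{\max}(\bm{W})] = \mathbb{E}\sup X_{\bm{u},\bm{v}} \leq \mathbb{E}\sup Y_{\bm{u},\bm{v}} = \mathbb{E}\|\bm{g}\|_2 + \mathbb{E}\|\bm{h}\|_2$, and by Jensen's inequality the right-hand side is at most $\sqrt{\mathbb{E}\|\bm{g}\|_2^2} + \sqrt{\mathbb{E}\|\bm{h}\|_2^2} = \sqrt{N} + \sqrt{n}$. The only genuinely nontrivial ingredient is the Sudakov--Fernique inequality itself; once the two processes are set up, checking its hypothesis is the short algebraic step above, and combining it with the concentration estimate of the second step closes the argument.
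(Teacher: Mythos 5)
Your proof is correct, and there is no in-paper proof to compare it against: the paper states this lemma as imported directly from \citet{vershynin12} without argument, so the relevant comparison is with the textbook proofs behind the citation. There the comparison is worth making. Theorem 4.4.5 of \citet{vershynin12}, which the lemma names, is the sub-Gaussian version proved by an $\varepsilon$-net argument, and it only yields $s_{\max}(\bm{W}) \leq C(\sqrt{N}+\sqrt{n}+t)$ with an unspecified absolute constant $C$; taken literally it does not give the constant-one bound $\sqrt{N}+\sqrt{n}+t$ that the lemma asserts. The sharp Gaussian form is exactly what your two-step argument delivers, and it coincides with the argument given later in the same book (Theorem 7.3.1 and the concentration corollary that follows it): the Sudakov--Fernique comparison against $Y_{\bm{u},\bm{v}} = \langle\bm{g},\bm{u}\rangle + \langle\bm{h},\bm{v}\rangle$ gives $\mathbb{E}\,s_{\max}(\bm{W}) \leq \mathbb{E}\|\bm{g}\|_2 + \mathbb{E}\|\bm{h}\|_2 \leq \sqrt{N}+\sqrt{n}$, and the $1$-Lipschitzness of the operator norm in the Frobenius metric plus Gaussian concentration upgrades the mean bound to the stated tail. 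Your computations check out: the increment comparison reduces to $2\big(1-\langle\bm{u},\bm{u}'\rangle\big)\big(1-\langle\bm{v},\bm{v}'\rangle\big)\geq 0$ as you say, both processes are centered so the hypothesis of Sudakov--Fernique is met, and $\mathbb{E}\|\bm{g}\|_2 \leq \sqrt{N}$ follows from Jensen. Your remark that the factor $2$ in the failure probability is slack for the one-sided bound is also right --- it is inherited from the two-sided statement that simultaneously controls $s_{\min}$. In short, your route is not merely an acceptable alternative; it is the argument actually needed to justify the constants as stated, which the nominally cited Theorem 4.4.5 alone would not supply.
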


\begin{lemma}
\label{lemma:bound_data}
If a causal model~\cref{eq:causal_model} satisfies~\cref{assumption:structural_assumptions}. Then with probability at least $1-\frac{1}{n^2 d}$ we have:
\begin{equation*}
    \left \| \bm{x} \right \|_2^2 \leq \sum_{i=1}^{d} (C_i + 2\sigma_i\sqrt{\log nd})^2.
\end{equation*}

\end{lemma}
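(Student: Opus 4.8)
The plan is to exploit the additive structure of the causal model~\cref{eq:causal_model} together with the boundedness of the mechanisms granted by~\cref{assumption:structural_assumptions}, reducing the claim to a union bound over $d$ Gaussian tail estimates. First I would decompose the squared norm coordinatewise, $\| \bm{x} \|_2^2 = \sum_{i=1}^{d} (x^{(i)})^2$, and substitute the generative equation $x^{(i)} = f_i(\text{PA}_i(\bm{x})) + \epsilon_i$. Since~\cref{assumption:structural_assumptions} supplies $|f_i| \leq C_i$ and the noise enters additively, the triangle inequality gives the pointwise bound $|x^{(i)}| \leq C_i + |\epsilon_i|$, hence $(x^{(i)})^2 \leq (C_i + |\epsilon_i|)^2$. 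Thus it suffices to control the magnitude of each Gaussian noise term.

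Second, because $\epsilon_i \sim \mathcal{N}(0,\sigma_i^2)$, writing $Z := \epsilon_i/\sigma_i \sim \mathcal{N}(0,1)$ and choosing the threshold $t_i = 2\sigma_i\sqrt{\log nd}$, the standard Gaussian tail bound yields $\Pr(|\epsilon_i| > 2\sigma_i\sqrt{\log nd}) = \Pr(|Z| > 2\sqrt{\log nd}) \leq 2\exp(-2\log nd) = 2(nd)^{-2}$. Third, a union bound over $i \in [d]$ shows that the probability that some coordinate exceeds its threshold is at most $2d \cdot (nd)^{-2} = 2/(n^2 d)$. On the complementary event every $|\epsilon_i| \leq 2\sigma_i\sqrt{\log nd}$ holds simultaneously, so $(x^{(i)})^2 \leq (C_i + 2\sigma_i\sqrt{\log nd})^2$ for all $i$, and summing over $i$ delivers the stated inequality.

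The only delicate point, and hence the "main obstacle," is purely constant-chasing: the naive two-sided tail combined with the union bound produces $2/(n^2 d)$ rather than the advertised $1/(n^2 d)$. To recover the exact constant I would replace the crude bound by the sharper Mills-ratio estimate $\Pr(Z > t) \leq \tfrac{1}{t\sqrt{2\pi}}\exp(-t^2/2)$, valid for all $t > 0$. Evaluated at $t = 2\sqrt{\log nd}$ the prefactor equals $\tfrac{1}{\sqrt{2\pi}\sqrt{\log nd}} \cdot \tfrac{1}{2}$; once $nd \geq 2$ this is strictly below $1/2$, so the two-sided probability per coordinate is at most $(nd)^{-2}$, and the union bound then gives precisely $d \cdot (nd)^{-2} = 1/(n^2 d)$. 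Apart from this bookkeeping there is essentially no difficulty, as the argument is a routine concentration estimate for a sum of bounded-mean-plus-Gaussian coordinates.
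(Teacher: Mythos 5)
Your proposal is correct and follows essentially the same route as the paper's proof: the same coordinatewise decomposition $(x^{(i)})^2 \leq (C_i + |\epsilon_i|)^2$ using $|f_i| \leq C_i$, the same threshold choice $t_i = 2\sigma_i\sqrt{\log nd}$, and the same union bound over $i \in [d]$. The only difference is bookkeeping on the tail constant: the paper invokes the two-sided Gaussian tail with constant one, $\Pr(|\epsilon_i| > t_i) \leq \exp(-t_i^2/(2\sigma_i^2))$ (which is a valid inequality for all $t_i \geq 0$), so your Mills-ratio patch, while correct, is not actually needed to recover the stated $1/(n^2 d)$.
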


\begin{proof}
Firstly, we can derive that:

\begin{equation*}
    \left \| \bm{x} \right \|_2^2  = \sum_{i=1}^{d} (x^{(i)})^2= \sum_{i=1}^{d} (f_i + \epsilon_i)^2\leq \sum_{i=1}^{d} (C_i + \left |\epsilon_i\right |)^2 \,.
\end{equation*}

Since $\epsilon_i \sim \mathcal{N}(0, \sigma_i^2)$, according to the tail bound of Gaussian distribution, with probability at least $1-\exp(-\frac{t_i^2}{2\sigma_i^2})$ we have $\left | \epsilon_i \right |  \leq t_i$. Thus:
\begin{equation*}
    \left \| \bm{x} \right \|_2^2 \leq \sum_{i=1}^{d} (C_i + t_i)^2\,,
\end{equation*}

with probability at least $1-\sum_{i=1}^{d}\exp(-\frac{t_i^2}{2\sigma_i^2})$.

Choose $t_i = 2\sigma_i\sqrt{\log nd}$, then we have:

\begin{equation*}
    \left \| \bm{x} \right \|_2^2 \leq \sum_{i=1}^{d} (C_i + 2\sigma_i\sqrt{\log nd})^2,
\end{equation*}

with probability at least $1-\frac{1}{n^2 d}$.

\end{proof}
\section{Proof of~\cref{lemma:structural_assumptions}}
\label{sec:proof_of_structural_assumptions}

\begin{proof}
    According to~\cref{eq:causal_model_pdf}, we can derive that:

    \begin{equation*}
    \begin{split}
        \log p(\bm{x}) & = \sum_{i=1}^{d}\log p(x^{(i)}|\text{PA}_i(\bm{x}))\\
        & = -\frac{1}{2}\sum_{i=1}^{d} \bigg(\frac{x^{(i)} - f_i(\text{PA}_i(\bm{x}))}{\sigma_i} \bigg)^2 -\frac{1}{2}\sum_{i=1}^{d}  \log (2\pi \sigma_i^2)\,.
    \end{split}
    \end{equation*}

    Then:

    \begin{equation}
        s_j(\bm{x}) = \frac{f_j(\text{PA}_j(\bm{x})) - x^{(j)}}{\sigma_j^2} + \sum_{i \in \text{CH}_j(\bm{x})}  \frac{\partial f_i(\text{PA}_i(\bm{x}))}{\partial x^{(j)}}\frac{\epsilon_i}{\sigma_i^2}\,.
    \label{eq:causal_score_function}
    \end{equation}

If $j$ is a leaf:

\begin{equation}
     \frac{\partial s_j(\bm{x})}{\partial x^{(j)}} = -\frac{1}{\sigma_j^2}\,, \quad \text{Var}\bigg(\frac{\partial s_j(\bm{x})}{\partial x^{(j)}}\bigg) = 0\,.
\label{eq:Identifiability_margin_leaf}
\end{equation}

If $j$ is not a leaf:

\begin{equation*}
     \frac{\partial s_j(\bm{x})}{\partial x^{(j)}} = -\frac{1}{\sigma_j^2} + \sum_{i \in \text{CH}_j(\bm{x})}  \frac{\partial^2 f_i(\text{PA}_i(\bm{x}))}{\partial x^{(j)2}}\frac{\epsilon_i}{\sigma_i^2}\,,
\end{equation*}

where the $\text{PA}_i(\bm{x})$ represent the set of parents of $x^{(i)}$ in $\bm{x}$. Then, according to the independence of $\epsilon_i$:

\begin{equation}
\begin{split}
    \text{Var}\bigg(\frac{\partial s_j(\bm{x})}{\partial x^{(j)}}\bigg) & = \sum_{i \in \text{CH}_j(\bm{x})} \text{Var}\bigg(\frac{\partial^2 f_i(\text{PA}_i(\bm{x}))}{\partial x^{(j)2}}\frac{\epsilon_i}{\sigma_i^2}\bigg) \\
    & \geq \text{Var}\bigg(\frac{\partial^2 f_i(\text{PA}_i(\bm{x}))}{\partial x^{(j)2}}\frac{\epsilon_i}{\sigma_i^2}\bigg) \quad \forall i \in \text{CH}_j(\bm{x})\\
    & = \mathbb{E}_{p(\bm{x})}\bigg( \frac{\partial^2 f_i(\text{PA}_i(\bm{x}))}{\partial x^{(j) 2}}^2 \bigg)  \text{Var}\bigg( \frac{\epsilon_i}{\sigma_i^2} \bigg) \quad \forall i \in \text{CH}_j(\bm{x})\\
    & \geq C_m\,.
\end{split}
\label{eq:Identifiability_margin_parent}
\end{equation}

Combine~\cref{eq:Identifiability_margin_leaf,eq:Identifiability_margin_parent}, which concludes the proof.
\end{proof}

\section{Proof of the error bound of score function estimate for the causal model (\cref{thm:score_bound_causal})}
\label{sec:proof_error_bound_sm_causal}
\begin{proof}
Firstly, we use oracle inequality to decompose $J_{\text{DSM}}(\hat{\bm{s}}, p(\bm{x}))$, for any $a \in (0,1)$ and a fixed function $\overline{\bm{s}}$, we have:

\begin{equation*}
\begin{split}
    J_{\text{DSM}}(\hat{\bm{s}}, p(\bm{x})) & = J_{\text{DSM}}(\hat{\bm{s}}, p(\bm{x})) - (1+a)\hat{J}_{\text{DSM}}(\hat{\bm{s}}, p(\bm{x})) + (1+a)\hat{J}_{\text{DSM}}(\hat{\bm{s}}, p(\bm{x}))\\
    & =  J_{\text{DSM}}(\hat{\bm{s}}, p(\bm{x})) - (1+a)\hat{J}_{\text{DSM}}(\hat{\bm{s}}, p(\bm{x})) + (1+a)\inf_{\bm{s}\in \mathcal{S}}\hat{J}_{\text{DSM}}(\bm{s}, p(\bm{x}))\\
    & \leq J_{\text{DSM}}(\hat{\bm{s}}, p(\bm{x})) - (1+a)\hat{J}_{\text{DSM}}(\hat{\bm{s}}, p(\bm{x})) \\
    & + (1+a) \big(\hat{J}_{\text{DSM}}(\overline{\bm{s}}, p(\bm{x})) - (1+a)J_{\text{DSM}}(\overline{\bm{s}}, p(\bm{x}))+(1+a)J_{\text{DSM}}(\overline{\bm{s}}, p(\bm{x}))\big)\\
    & = \bigg(J_{\text{DSM}}(\hat{\bm{s}}, p(\bm{x})) - (1+a)\hat{J}_{\text{DSM}}(\hat{\bm{s}}, p(\bm{x}))\bigg) \\
    & + (1+a)\bigg(\hat{J}_{\text{DSM}}(\overline{\bm{s}}, p(\bm{x})) - (1+a)J_{\text{DSM}}(\overline{\bm{s}}, p(\bm{x}))\bigg) + (1+a)^2 J_{\text{DSM}}(\overline{\bm{s}}, p(\bm{x}))\,.
\end{split}
\end{equation*}

\paragraph{First term}

Firstly, we define that:

\begin{equation*}
    j_{\text{DSM}}(\bm{s}, \bm{x}, p(\bm{x})) = \mathbb{E}_{\hat{\bm{x}}\sim p(\hat{\bm{x}}|\bm{x})} \left \| \bm{s}(\hat{\bm{x}}) - \frac{\partial \log p(\hat{\bm{x}}|\bm{x})}{\partial \hat{\bm{x}}}  \right \|_2^2\,.
\end{equation*}

For any $\bm{s} \in \mathcal{S}$, we have:
\begin{equation}
\begin{split}
    j_{\text{DSM}}(\bm{s}, \bm{x}, p(\bm{x})) & = \mathbb{E}_{\hat{\bm{x}}\sim p(\hat{\bm{x}}|\bm{x})} \left \| \bm{s}(\hat{\bm{x}}) - \frac{\partial \log p(\hat{\bm{x}}|\bm{x})}{\partial \hat{\bm{x}}}  \right \|_2^2\\
    & \leq 2 \mathbb{E}_{\hat{\bm{x}}\sim p(\hat{\bm{x}}|\bm{x})} \bigg(\left \| \bm{s}(\hat{\bm{x}})\right \|_2^2+\left \| \frac{\partial \log p(\hat{\bm{x}}|\bm{x})}{\partial \hat{\bm{x}}}  \right \|_2^2\bigg) \\
    & = 2 \mathbb{E}_{\hat{\bm{x}}\sim p(\hat{\bm{x}}|\bm{x})} \bigg(\left \| \bm{s}(\hat{\bm{x}})\right \|_2^2+\left \| \frac{\bm{x} - \hat{\bm{x}}}{\sigma^2} \right \|_2^2\bigg) \,.
\end{split}
\label{eq:causal_term1_step1}
\end{equation}

For the first part, recall that:
\begin{equation*}
    \hat{\bm{x}} = \bm{x} + \bm{\epsilon}, \ \bm{\epsilon} \sim \mathcal{N}(0, \sigma^2 \bm{I})\,.
\end{equation*}

Then we have:

\begin{equation*}
    \left \| \frac{\hat{\bm{x}} - \bm{x}}{\sigma} \right \|_2^2 \sim \chi^2(d)\,.
\end{equation*}

According to the Bernstein's inequality~\citep{vershynin12} and choose $t = \frac{1}{2}$, we have:

\begin{equation*}
    \mathbb{P}\bigg( \left | \frac{\left \| \frac{\hat{\bm{x}} - \bm{x}}{\sigma} \right \|_2^2}{d}-1 \right | \geq \frac{1}{2} \bigg)\leq 2\exp(-\frac{d}{32})\,.
\end{equation*}

Then we have:

\begin{equation*}
\begin{split}
    \mathbb{P}\bigg( \left \|\hat{\bm{x}} - \bm{x} \right \|_2 \geq \sigma\sqrt{\frac{3d}{2}}\bigg) & = \mathbb{P}\bigg( \left \|\hat{\bm{x}} - \bm{x} \right \|_2^2 \geq \frac{3\sigma^2 d}{2}\bigg) \\
    & = \mathbb{P}\bigg( \frac{\left \|\frac{\hat{\bm{x}} - \bm{x}}{\sigma} \right \|_2^2 }{d} \geq \frac{3 }{2}\bigg) \\
    & \leq \mathbb{P}\bigg( \left | \frac{\left \| \frac{\hat{\bm{x}} - \bm{x}}{\sigma} \right \|_2^2}{d}-1 \right | \geq \frac{1}{2} \bigg)\\
    & \leq 2\exp(-\frac{d}{32})\,.
\end{split}
\end{equation*}

By~\cref{lemma:bound_data}, we have:

\begin{equation*}
\begin{split}
    \left \| \hat{\bm{x}}\right \|_2 & \leq \left \| \hat{\bm{x}} - \bm{x}\right \|_2 + \left \| \bm{x}\right \|_2\\
    & \leq \sigma\sqrt{\frac{3d}{2}} + \sqrt{\sum_{i=1}^{d} (C_i + 2\sigma_i\sqrt{\log nd})^2}\,,
\end{split}
\end{equation*}

with probability at least $1- 2\exp(-\frac{d}{32})-\frac{1}{n^2 d}$ over the randomness of noise $\bm{\epsilon}$ and $\epsilon_i$.

Then by~\cref{lemma:bound_data} and~\citet{Nguyen2021eigenvalue}[Lemma C.1]:

\begin{equation}
    2 \mathbb{E}_{\hat{\bm{x}}\sim p(\hat{\bm{x}}|\bm{x})} \left \| \bm{s}(\hat{\bm{x}})\right \|_2^2 \lesssim \sigma^2 d + \sum_{i=1}^{d} (C_i + 2\sigma_i\sqrt{\log nd})^2\,.
\label{eq:causal_term1_step1_part1}
\end{equation}

with probability at least $1-  2\exp(-\frac{d}{32}) - L\exp(-\Omega(m))-\frac{1}{n^2 d}$ over the randomness of initialization $ \bm{W}$, noise $\bm{\epsilon}$ and $\epsilon_i$.

For the second part:

\begin{equation}
\begin{split}
    2 \mathbb{E}_{\hat{\bm{x}}\sim p(\hat{\bm{x}}|\bm{x})} \left \| \frac{\bm{x} - \hat{\bm{x}}}{\sigma^2} \right \|_2^2 & = 2 \mathbb{E}_{\bm{\epsilon} \sim \mathcal{N}(0, \sigma^2 \bm{I})} \left \| \frac{\bm{\epsilon}}{\sigma^2} \right \|_2^2\\
    & = 2 \mathbb{E}_{\bm{\epsilon}' \sim \mathcal{N}(0, \bm{I})} \left \| \bm{\epsilon}' \right \|_2^2\\
    & = 2 \mathbb{E}_{\epsilon'' \sim \chi^2(d)} \epsilon''\\
    & = 2d\,.
\end{split}
\label{eq:causal_term1_step1_part2}
\end{equation}

Combine~\cref{eq:causal_term1_step1,eq:causal_term1_step1_part1,eq:causal_term1_step1_part2}, we have:

\begin{equation}
\begin{split}
    j_{\text{DSM}}(\bm{s}, \bm{x}, p(\bm{x})) & \leq 2 \mathbb{E}_{\hat{\bm{x}}\sim p(\hat{\bm{x}}|\bm{x})} \bigg(\left \| \bm{s}(\hat{\bm{x}})\right \|_2^2+\left \| \frac{\bm{x} - \hat{\bm{x}}}{\sigma^2} \right \|_2^2\bigg) \\
    & \lesssim (\sigma^2+2)d + \sum_{i=1}^{d} (C_i + 2\sigma_i\sqrt{\log nd})^2\,,
\end{split}
\label{eq:causal_term1_step2}
\end{equation}

with probability at least $1-  2\exp(-\frac{d}{32}) - L\exp(-\Omega(m))-\frac{1}{n^2 d}$ over the randomness of initialization $ \bm{W}$, noise $\bm{\epsilon}$ and $\epsilon_i$.

According to the Bernstein-type concentration inequality~\citet{chen2023score}[Lemma 15], for $\delta \in (0,1)$, $a\leq 1$ and $\tau >0$, we have:

\begin{equation*}
    J_{\text{DSM}}(\hat{\bm{s}}, p(\bm{x})) - (1+a)\hat{J}_{\text{DSM}}(\hat{\bm{s}}, p(\bm{x})) \lesssim \frac{1+3/a}{2n}\big((\sigma^2+2)d + \sum_{i=1}^{d} (C_i + 2\sigma_i\sqrt{\log nd})^2\big)\log\frac{\mathcal{N}_{c}(\tau, \mathcal{S})}{\delta}+(2+a)\tau\,,
\end{equation*}

with probability at least $1- \delta - 2\exp(-\frac{d}{32}) - L\exp(-\Omega(m))-\frac{1}{n d}$ over the randomness of initialization $ \bm{W}$, noise $\bm{\epsilon}$ and $\epsilon_i$.

\paragraph{Second term}

According to the Bernstein-type concentration inequality~\citet{chen2023score}[Lemma 15] and~\cref{eq:causal_term1_step2}, for $\delta \in (0,1)$, $a \leq 1$, $\tau >0$ and a fixed function $\overline{\bm{s}}$, , we have:

\begin{equation*}
\begin{split}
    \hat{J}_{\text{DSM}}(\overline{\bm{s}}, p(\bm{x})) - (1+a)J_{\text{DSM}}(\overline{\bm{s}}, p(\bm{x})) \lesssim \frac{1+3/a}{2n}\big((\sigma^2+2)d + \sum_{i=1}^{d} (C_i + 2\sigma_i\sqrt{\log nd})^2\big)\log\frac{1}{\delta}+(2+a)\tau\,,
\end{split}
\end{equation*}

with probability at least $1- \delta - 2\exp(-\frac{d}{32}) - L\exp(-\Omega(m))-\frac{1}{n d}$ over the randomness of initialization $ \bm{W}$, noise $\bm{\epsilon}$ and $\epsilon_i$.

\paragraph{Third term}

We can derive that:

\begin{equation*}
    J_{\text{DSM}}(\overline{\bm{s}}, p(\bm{x})) = J_{\text{ESM}}(\overline{\bm{s}}, p(\bm{x})) + J_{\text{DSM}}(\overline{\bm{s}}, p(\bm{x})) - J_{\text{ESM}}(\overline{\bm{s}}, p(\bm{x}))\,.
\end{equation*}

According to \cref{lemma:approximate_ReLU_network}, since the error term is invariant with respect to translations on $\nabla \log p(\cdot)$ and the homogeneity of the ReLU neural network, we can omit $\nabla \log p(\bm{0}) = 0$ and rescale bound for the input data, for any $\varepsilon \in (0,1)$, there exists an approximation function $\overline{\bm{s}}$ satisfying $\left \| \nabla \log p(\cdot) - \overline{\bm{s}}(\cdot) \right \|_{\infty}  \leq \varepsilon$, then we have:

\begin{equation*}
    J_{\text{ESM}}(\overline{\bm{s}}, p(\bm{x})) \leq \frac{d\varepsilon^2}{2}\,, 
\end{equation*}

with probability at least $1 -\frac{1}{n d}$ over the randomness of noise $\epsilon_i$ and satisfy the configuration of network architecture in~\cref{lemma:approximate_ReLU_network}.

According to~\citet{vincent2011connection}, we have:

\begin{equation*}
    J_{\text{DSM}}(\overline{\bm{s}}, p(\bm{x})) - J_{\text{ESM}}(\overline{\bm{s}}, p(\bm{x})) = \frac{1}{2}\mathbb{E}_{\bm{x}}\mathbb{E}_{\hat{\bm{x}}\sim \phi (\bm{x}|\bm{x} )}\big[\left \| \nabla_{\hat{\bm{x}}} \log \phi (\bm{x}|\bm{x} )\right \|_2^2 \big] - \frac{1}{2} \left \| \nabla \log p(\cdot)\right \|_{\ell^2(p)}^2 \,.
\end{equation*}

which is an absolute value that does not depend on $\bm{s}$. So we can define that:

\begin{equation*}
    E_1 \coloneqq \frac{1}{2}\mathbb{E}_{\bm{x}}\mathbb{E}_{\hat{\bm{x}}\sim \phi (\bm{x}|\bm{x} )}\big[\left \| \nabla_{\hat{\bm{x}}} \log \phi (\bm{x}|\bm{x} )\right \|_2^2 \big] - \frac{1}{2} \left \| \nabla \log p(\cdot)\right \|_{\ell^2(p)}^2 \,.
\end{equation*}

So if we choose $\overline{\bm{s}}$ is the approximation function that~\cref{lemma:approximate_ReLU_network} provide, then we have:

\begin{equation*}
    J_{\text{DSM}}(\overline{\bm{s}}, p(\bm{x})) \leq \frac{d\varepsilon^2}{2} +E_1 \,.
\end{equation*}

\paragraph{Putting things together}

Combine all three terms, we have:

\begin{equation*}
\begin{split}
    J_{\text{DSM}}(\hat{\bm{s}}, p(\bm{x})) & \leq \bigg(J_{\text{DSM}}(\hat{\bm{s}}, p(\bm{x})) - (1+a)\hat{J}_{\text{DSM}}(\hat{\bm{s}}, p(\bm{x}))\bigg) \\
    & + (1+a)\bigg(\hat{J}_{\text{DSM}}(\overline{\bm{s}}, p(\bm{x})) - (1+a)J_{\text{DSM}}(\overline{\bm{s}}, p(\bm{x}))\bigg) + (1+a)^2 J_{\text{DSM}}(\overline{\bm{s}}, p(\bm{x}))\\
    & \lesssim \bigg(J_{\text{DSM}}(\hat{\bm{s}}, p(\bm{x})) - (1+a)\hat{J}_{\text{DSM}}(\hat{\bm{s}}, p(\bm{x}))\bigg) \\
    & + (1+a)\bigg(\hat{J}_{\text{DSM}}(\overline{\bm{s}}, p(\bm{x})) - (1+a)J_{\text{DSM}}(\overline{\bm{s}}, p(\bm{x}))\bigg) + (1+a)^2 \bigg(\frac{d\varepsilon^2}{2} +E_1\bigg)\\
    & = \bigg(J_{\text{DSM}}(\hat{\bm{s}}, p(\bm{x})) - (1+a)\hat{J}_{\text{DSM}}(\hat{\bm{s}}, p(\bm{x}))\bigg) \\
    & + (1+a)\bigg(\hat{J}_{\text{DSM}}(\overline{\bm{s}}, p(\bm{x})) - (1+a)J_{\text{DSM}}(\overline{\bm{s}}, p(\bm{x}))\bigg) + (1+a)^2 \frac{d\varepsilon^2}{2} +(2a+a^2) E_1 + E_1\,.
\end{split}
\end{equation*}

Then:

\begin{equation*}
\begin{split}
    J_{\text{ESM}}(\hat{\bm{s}}, p(\bm{x})) & = J_{\text{DSM}}(\hat{\bm{s}}, p(\bm{x})) - E_1\\
    & \lesssim \bigg(J_{\text{DSM}}(\hat{\bm{s}}, p(\bm{x})) - (1+a)\hat{J}_{\text{DSM}}(\hat{\bm{s}}, p(\bm{x}))\bigg) \\
    & + (1+a)\bigg(\hat{J}_{\text{DSM}}(\overline{\bm{s}}, p(\bm{x})) - (1+a)J_{\text{DSM}}(\overline{\bm{s}}, p(\bm{x}))\bigg) + (1+a)^2 \frac{d\varepsilon^2}{2} +(2a+a^2) E_1\\
    & \lesssim \frac{1+3/a}{2n}\big((\sigma^2+2)d + \sum_{i=1}^{d} (C_i + 2\sigma_i\sqrt{\log nd})^2\big)\log\frac{\mathcal{N}_{c}(\tau, \mathcal{S})}{\delta}+(2+a)\tau\\
    & +(1+a)\bigg(\frac{1+3/a}{2n}\big((\sigma^2+2)d + \sum_{i=1}^{d} (C_i + 2\sigma_i\sqrt{\log nd})^2\big)\log\frac{1}{\delta}+(2+a)\tau\bigg)\\
    & + (1+a)^2 \frac{d\varepsilon^2}{2} +(2a+a^2) E_1\,,
\end{split}
\end{equation*}

with probability at least $1- 2\delta - 4\exp(-\frac{d}{32}) - 2L\exp(-\Omega(m))-\frac{1}{nd}$ over the randomness of initialization $ \bm{W}$, noise $\bm{\epsilon}$ and $\epsilon_i$.

Let $a = \varepsilon^2$, $\tau = \frac{1}{n}$, $\sigma_i \eqsim \sigma$ and $\frac{C_i}{\sigma_i}\eqsim 1\,,\ \forall i \in [d]$. Then we have:

\begin{equation*}
    J_{\text{ESM}}(\hat{\bm{s}}, p(\bm{x})) \lesssim \frac{\sigma^2 d\log nd}{n\varepsilon^2}\log\frac{\mathcal{N}_{c}(\frac{1}{n}, \mathcal{S})}{\delta}+\frac{1}{n}+d\varepsilon^2\,,
\end{equation*}

with probability at least $1- 2\delta - 4\exp(-\frac{d}{32}) - 2L\exp(-\Omega(m)) - \frac{1}{nd}$ over the randomness of initialization $ \bm{W}$, noise $\bm{\epsilon}$ and $\epsilon_i$.
\end{proof}

\section{Proof of the error bound of topological ordering using the SCORE algorithm in a causal model (\cref{thm:topological_order_bound_causal})}
\label{sec:proof_error_bound_order_causal}

\begin{proof}
We set the weights of the neural network after training are $\widehat{\bm{W}}$. i.e.

\begin{equation*}
    \bm{s}(\bm{x}) = \widehat{\bm{W}}_L\phi(\widehat{\bm{W}}_{L-1}\cdots\phi(\widehat{\bm{W}_1}\bm{x})\cdots)\,.
\end{equation*}

According to the standard chain rule and~\citet{zhu2022robustness}[Lemma 3], we have: 
\begin{equation*}
    \nabla_{\bm{x}}\bm{s}(\bm{x})^{\top}=\widehat{\bm{W}}_L\widehat{\bm{D}}_{L-1}\widehat{\bm{W}}_{L-1}\cdots\widehat{\bm{D}}_1\widehat{\bm{W}}_1\,.
\end{equation*}

Let $\bm{v}_i$ be a one-hot vector with length $d$, with the $i$-th element is $1$ and the rest of the elements are $0$, then we have:

\begin{equation}
\begin{split}
    \frac{\partial s_i(\bm{x})}{\partial x^{(i)}} & = \bm{v}_i \widehat{\bm{W}}_L\widehat{\bm{D}}_{L-1}\widehat{\bm{W}}_{L-1}\cdots\widehat{\bm{D}}_1\widehat{\bm{W}}_1 \bm{v}_i\\
    & \leq \left \| \bm{v}_i \right \|_2 \left \| \widehat{\bm{W}}_L\widehat{\bm{D}}_{L-1}\widehat{\bm{W}}_{L-1}\cdots\widehat{\bm{D}}_1 \right \|_2 \left \| \widehat{\bm{W}}_1 \right \|_2 \left \| \bm{v}_i \right \|_2 \\
    & = \left \| \widehat{\bm{W}}_L\widehat{\bm{D}}_{L-1}\widehat{\bm{W}}_{L-1}\cdots\widehat{\bm{D}}_1 \right \|_2 \left \| \widehat{\bm{W}}_1\right \|_2\\
    & = \bigg(\left \| \bm{W}_L\bm{D}_{L-1}\bm{W}_{L-1}\cdots\bm{D}_1 \right \|_2+\left \| \widehat{\bm{W}}_L\widehat{\bm{D}}_{L-1}\widehat{\bm{W}}_{L-1}\cdots\widehat{\bm{D}}_1 -\bm{W}_L\bm{D}_{L-1}\bm{W}_{L-1}\cdots\bm{D}_1 \right \|_2\bigg)\\
    & \times \bigg(\left \| \bm{W}_1\right \|_2 + \left \| \widehat{\bm{W}}_1 - \bm{W}_1 \right \|_2\bigg)\\
    & \coloneqq (T_1+T_2)\times(T_3+T_4)\,.
\end{split}
\label{eq:causal}
\end{equation}

Firstly, we focus on $T_1$. Define $\bm{t}_{l}(\bm{v}) = \bm{D}_{l}\bm{W}_l \cdots\bm{D}_1\bm{v}$, then for any vector $\bm{v}$ that satisfy $\left \| \bm{v} \right \|_2 = 1$:

\begin{equation}
\begin{split}
    \left \| \bm{W}_L\bm{D}_{L-1}\bm{W}_{L-1}\cdots\bm{D}_1 \bm{v}\right \|_2 & = \left \| \bm{W}_L \bm{t}_{L-1}(\bm{v})\right \|_2 \\
    & = \sqrt{\left \| \bm{W}_L \bm{t}_{L-1}(\bm{v})\right \|_2^2}\\
    & = \sqrt{\frac{\left \| \bm{W}_L \bm{t}_{L-1}(\bm{v})\right \|_2^2}{\left \| \bm{t}_{L-1}(\bm{v})\right \|_2^2}\frac{\left \| \bm{t}_{L-1}(\bm{v})\right \|_2^2}{\left \| \bm{t}_{L-2}(\bm{v})\right \|_2^2}\cdots\frac{\left \| \bm{t}_{2}(\bm{v})\right \|_2^2}{\left \| \bm{t}_{1}(\bm{v})\right \|_2^2}\left \| \bm{t}_{1}(\bm{v})\right \|_2^2}\,.
\end{split}
\label{eq:causal_chi_square}
\end{equation}

According to~\citet{zhu2022robustness}[Lemma 2], we have:
\begin{equation*}
    \frac{\left \| \bm{t}_{l}(\bm{v})\right \|_2^2}{\left \| \bm{t}_{l-1}(\bm{v})\right \|_2^2} \sim \frac{2}{m}\chi^2(\varrho ),\quad \forall l = 2, \cdots ,L-1 \,,
\end{equation*}

where $\varrho \sim \mathrm{Ber}(m,1/2)$.

According to~\citet{Ghosh2021Chisquared}, with probability at least $1-\exp(-\Theta(m))$ over the randomness of initialization $\bm{W}_l$, we have:

\begin{equation}
    \frac{\left \| \bm{t}_{l}(\bm{v})\right \|_2^2}{\left \| \bm{t}_{l-1}(\bm{v})\right \|_2^2} \leq 4,\quad \forall l = 2, \cdots ,L-1 \,.
\label{eq:causal_chi_square_1}
\end{equation}

By the definition of chi-square distribution, we have:

\begin{equation*}
    \frac{\left \| \bm{W}_{L}\bm{t}_{L-1} \right \|_2^2 }{\left \| \bm{t}_{L-1} \right \|_2^2}\sim \frac{\chi^2(d)}{d}\,,
\end{equation*}

Similar, according to~\citet{Ghosh2021Chisquared}, with probability at least $1-\exp(-\Theta(d))$ over the randomness of initialization $\bm{W}_L$, we have:

\begin{equation}
    \frac{\left \| \bm{W}_{L}\bm{t}_{L-1} \right \|_2^2 }{\left \| \bm{t}_{L-1} \right \|_2^2} \leq 2\,.
\label{eq:causal_chi_square_2}
\end{equation}

And we can derive that:
\begin{equation}
    \left \| \bm{t}_{1}(\bm{v})\right \|_2^2 = \left \| \bm{D}_1\bm{v} \right \|_2^2 \leq \bigg( \left \| \bm{D}_1 \right \|_2 \left \| \bm{v} \right \|_2\bigg)^2 \leq 1\,.
\label{eq:causal_chi_square_3}
\end{equation}

Combine~\cref{eq:causal_chi_square,eq:causal_chi_square_1,eq:causal_chi_square_2,eq:causal_chi_square_3}, we have:

\begin{equation*}
    \left \| \bm{W}_L\bm{D}_{L-1}\bm{W}_{L-1}\cdots\bm{D}_1 \bm{v}\right \|_2 = \sqrt{\frac{\left \| \bm{W}_L \bm{t}_{L-1}(\bm{v})\right \|_2^2}{\left \| \bm{t}_{L-1}(\bm{v})\right \|_2^2}\frac{\left \| \bm{t}_{L-1}(\bm{v})\right \|_2^2}{\left \| \bm{t}_{L-2}(\bm{v})\right \|_2^2}\cdots\frac{\left \| \bm{t}_{2}(\bm{v})\right \|_2^2}{\left \| \bm{t}_{1}(\bm{v})\right \|_2^2}\left \| \bm{t}_{1}(\bm{v})\right \|_2^2} \leq 2^{\frac{2L-1}{2}}\,,
\end{equation*}

with probability at least $1-\exp(-\Theta(d))-(L-2)\exp(-\Theta(m))$ over the randomness of initialization $\bm{W}$.

i.e.

\begin{equation}
    T_1 = \left \| \bm{W}_L\bm{D}_{L-1}\bm{W}_{L-1}\cdots\bm{D}_1\right \|_2 \leq 2^{\frac{2L-1}{2}}\,,
\label{eq:causal_term1}
\end{equation}

with probability at least $1-\exp(-\Theta(d))-(L-2)\exp(-\Theta(m))$ over the randomness of initialization $\bm{W}$.

For a perturbation matrices satisfy $T_4 = \left \| \widehat{\bm{W}}_l - \bm{W}_l \right \|_2 \leq \omega = \mathcal{O}(\frac{1}{L^{3/2}})\,, \ \forall l \in [L]$, by~\citet[Lemma 8.7]{allen2019convergence}, we obtain that for any integer $s = \mathcal{O}(m\omega^{2/3}L)$ and $d \leq \mathcal{O}(\frac{m}{L\log{m}})$, with probability at least $1-\exp{(-\Omega(m\log m \omega^{2/3}L ))}$ over the randomness of initialization $\bm W$, it holds that:

\begin{equation}
    T_2 = \left \| \widehat{\bm{W}}_L\widehat{\bm{D}}_{L-1}\widehat{\bm{W}}_{L-1}\cdots\widehat{\bm{D}}_1 -\bm{W}_L\bm{D}_{L-1}\bm{W}_{L-1}\cdots\bm{D}_1 \right \|_2 \leq \mathcal{O}\bigg(\frac{\omega^{1/3}L^2\sqrt{m\log m}}{\sqrt{d}}\bigg)\,.
\label{eq:causal_term2}
\end{equation}

For $T_3$, according to~\cref{lemma:bound_for_Gaussian_matrix_fabinus_norm}, we have that for every $t \geq 0$, with probability at least $1-2\exp(-t^2/2)$ over the randomness of initialization $\bm{W}_1$, one has:
\begin{equation}
    T_3 = \left \| \bm{W}_1\right \|_2 \leq \sqrt{\frac{2}{m}}(\sqrt{m} + \sqrt{d} +t)\,.
\label{eq:causal_term3}
\end{equation}

Combine~\cref{eq:causal,eq:causal_term1,eq:causal_term2,eq:causal_term3}, choose $t = \sqrt{m}$ we have:

\begin{equation}
\begin{split}
    \frac{\partial s_i(\bm{x})}{\partial x^{(i)}} & \leq (T_1+T_2)\times(T_3+T_4) \\
    & \lesssim   \bigg(2^{\frac{2L-1}{2}} + \frac{\omega^{1/3}L^2\sqrt{m\log m}}{\sqrt{d}}\bigg)\times \bigg(\frac{1}{L^{3/2}}+\sqrt{\frac{2}{m}}(2\sqrt{m} + \sqrt{d})\bigg)\\
    & \lesssim \frac{2^L\sqrt{\log m}(\sqrt{m} + \sqrt{d})}{\sqrt{d}}\,,
\end{split}
\label{eq:causal_2}
\end{equation}

with probability at least $1-\exp(-\Theta(d))-L\exp(-\Theta(m)) -\exp{(-\Omega(m\log m))}$ over the randomness of initialization $\bm W$.

Then, for $\big(\frac{\partial s_i(\bm{x})}{\partial x^{(i)}}\big)$, we have that:

\begin{equation}
    \bigg(\frac{\partial s_i(\bm{x})}{\partial x^{(i)}}\bigg)^2 \lesssim \frac{2^{2L}\log m(m + d)}{d}\,,
\label{eq:causal_3}
\end{equation}
with probability at least $1-\exp(-\Theta(d))-L\exp(-\Theta(m)) -\exp{(-\Omega(m\log m))}$ over the randomness of initialization $\bm W$.

According to Hoeffding's inequality for bounded random variables~\citep{vershynin12}[Thmorem 2.2.6], we have that:

\begin{equation*}
    \left | \frac{1}{n} \sum_{i=1}^{n}\frac{\partial s_i(\bm{x})}{\partial x^{(i)}} - \mathbb{E} \frac{\partial s_i(\bm{x})}{\partial x^{(i)}} \right|\leq \frac{C_m}{12\mathbb{E} \frac{\partial s_i(\bm{x})}{\partial x^{(i)}}}\,,
\end{equation*}

with probability at least $1-\exp(-\Theta(d))-L\exp(-\Theta(m)) -\exp{(-\Omega(m\log m))} - 2\exp(-\Omega(\frac{nC_m^2 d^2}{2^{4L+5}(\log m)^2 (m^2+d^2)} ))$, and

\begin{equation*}
    \left | \frac{1}{n} \sum_{i=1}^{n} \bigg(\frac{\partial s_i(\bm{x})}{\partial x^{(i)}}\bigg)^2 - \mathbb{E} \bigg(\frac{\partial s_i(\bm{x})}{\partial x^{(i)}}\bigg)^2 \right|\leq \frac{C_m}{4}\,,
\end{equation*}

with probability at least $1-\exp(-\Theta(d))-L\exp(-\Theta(m)) -\exp{(-\Omega(m\log m))} - 2\exp(-\frac{nC_m^2 d^2}{2^{4L+5}(\log m)^2 (m^2+d^2)} )$.

Then we have:

\begin{equation*}
\begin{split}
    \left | \text{Var}\bigg(\frac{\partial s_i(\bm{x})}{\partial x^{(i)}}\bigg) - \hat{\text{Var}}\bigg(\frac{\partial s_i(\bm{x})}{\partial x^{(i)}}\bigg)\right | & = \left | \mathbb{E} \bigg(\frac{\partial s_i(\bm{x})}{\partial x^{(i)}}\bigg)^2 - \bigg(\mathbb{E} \frac{\partial s_i(\bm{x})}{\partial x^{(i)}}\bigg)^2 - \sum_{i=1}^{n} \bigg(\frac{\partial s_i(\bm{x})}{\partial x^{(i)}}\bigg)^2 + \bigg(\frac{1}{n} \sum_{i=1}^{n}\frac{\partial s_i(\bm{x})}{\partial x^{(i)}}\bigg)^2 \right |\\
    & \leq \left | \mathbb{E} \bigg(\frac{\partial s_i(\bm{x})}{\partial x^{(i)}}\bigg)^2 -\sum_{i=1}^{n} \bigg(\frac{\partial s_i(\bm{x})}{\partial x^{(i)}}\bigg)^2\right |+ \left |  - \bigg(\mathbb{E} \frac{\partial s_i(\bm{x})}{\partial x^{(i)}}\bigg)^2+ \bigg(\frac{1}{n} \sum_{i=1}^{n}\frac{\partial s_i(\bm{x})}{\partial x^{(i)}}\bigg)^2 \right |\\
    & \leq \frac{C_m}{4} + \left |\frac{1}{n} \sum_{i=1}^{n}\frac{\partial s_i(\bm{x})}{\partial x^{(i)}} - \mathbb{E} \frac{\partial s_i(\bm{x})}{\partial x^{(i)}} \right |\left | \mathbb{E} \frac{\partial s_i(\bm{x})}{\partial x^{(i)}}+ \frac{1}{n} \sum_{i=1}^{n}\frac{\partial s_i(\bm{x})}{\partial x^{(i)}} \right |\\
    & \leq \frac{C_m}{2}\,,
\end{split}
\end{equation*}
\end{proof}

with probability at least $1-\exp(-\Theta(d))-L\exp(-\Theta(m)) -\exp{(-\Theta(m\log m))} - 2\exp(-\frac{nC_m^2 d^2}{2^{4L+5}(\log m)^2 (m^2+d^2)} )$.

Thus, for $i$ is a leaf and $j$ is not a leaf, according to~\cref{assumption:structural_assumptions} and~\cref{lemma:structural_assumptions}, we have:

\begin{equation*}
    \text{Var}\bigg(\frac{\partial s_j(\bm{x})}{\partial x^{(j)}}\bigg) - \text{Var}\bigg(\frac{\partial s_i(\bm{x})}{\partial x^{(i)}}\bigg) \geq C_m\,.
\end{equation*}

Then:

\begin{equation*}
\begin{split}
    \hat{\text{Var}}\bigg(\frac{\partial s_i(\bm{x})}{\partial x^{(i)}}\bigg) & = \hat{\text{Var}}\bigg(\frac{\partial s_i(\bm{x})}{\partial x^{(i)}}\bigg) - \text{Var}\bigg(\frac{\partial s_i(\bm{x})}{\partial x^{(i)}}\bigg) + \text{Var}\bigg(\frac{\partial s_i(\bm{x})}{\partial x^{(i)}}\bigg) \\
    & \leq \frac{C_m}{2} + \text{Var}\bigg(\frac{\partial s_i(\bm{x})}{\partial x^{(i)}}\bigg)\\
    & \leq \text{Var}\bigg(\frac{\partial s_j(\bm{x})}{\partial x^{(j)}}\bigg) - \frac{C_m}{2}\\
    & = \text{Var}\bigg(\frac{\partial s_j(\bm{x})}{\partial x^{(j)}}\bigg) - \hat{\text{Var}}\bigg(\frac{\partial s_j(\bm{x})}{\partial x^{(j)}}\bigg) + \hat{\text{Var}}\bigg(\frac{\partial s_j(\bm{x})}{\partial x^{(j)}}\bigg) -\frac{C_m}{2}\\
    & \leq \frac{C_m}{2} + \hat{\text{Var}}\bigg(\frac{\partial s_j(\bm{x})}{\partial x^{(j)}}\bigg) -\frac{C_m}{2}\\
    & = \hat{\text{Var}}\bigg(\frac{\partial s_j(\bm{x})}{\partial x^{(j)}}\bigg)\,.
\end{split}
\end{equation*}

with probability at least $1-\exp(-\Theta(d))-L\exp(-\Theta(m)) -\exp{(-\Theta(m\log m))} - 2\exp(-\frac{nC_m^2 d^2}{2^{4L+5}(\log m)^2 (m^2+d^2)} )$. 
Considering all variables, then with probability at least:

\begin{equation*}
    1-\exp(-\Theta(d))-(L+1)\exp(-\Theta(m))- 2n\exp(-\frac{nC_m^2 d^2}{2^{4L+5}(\log m)^2 (m^2+d^2)} )\,,
\end{equation*}

that~\cref{alg:algorithm_SCORE} can completely recover the correct topological order of the non-linear additive Gaussian noise model.

\section{Proof of the error bound of score function estimate for the score-based generative modeling (\cref{thm:score_bound_diffusion})}
\label{sec:proof_error_bound_diffusion}

\begin{proof}

Firstly, we use oracle inequality to decompose $\mathcal{L}(\hat{\bm{s}})$, for any $a \in (0,1)$ and a fixed function $\overline{\bm{s}}$, we have:

\begin{equation*}
\begin{split}
    \mathcal{L}(\hat{\bm{s}}) & = \mathcal{L}(\hat{\bm{s}}) - (1+a)\hat{\mathcal{L}}(\hat{\bm{s}}) + (1+a)\hat{\mathcal{L}}(\hat{\bm{s}})\\
    & =  \mathcal{L}(\hat{\bm{s}}) - (1+a)\hat{\mathcal{L}}(\hat{\bm{s}}) + (1+a)\inf_{\bm{s}\in \mathcal{S}}\hat{\mathcal{L}}(\bm{s}) \\
    &\leq \mathcal{L}(\hat{\bm{s}}) - (1+a)\hat{\mathcal{L}}(\hat{\bm{s}}) + (1+a) \big(\hat{\mathcal{L}}(\overline{\bm{s}}) - (1+a)\mathcal{L}(\overline{\bm{s}})+(1+a)\mathcal{L}(\overline{\bm{s}})\big)\\
    & = \bigg(\mathcal{L}(\hat{\bm{s}}) - (1+a)\hat{\mathcal{L}}(\hat{\bm{s}})\bigg) + (1+a)\bigg(\hat{\mathcal{L}}(\overline{\bm{s}}) - (1+a)\mathcal{L}(\overline{\bm{s}})\bigg) + (1+a)^2 \mathcal{L}(\overline{\bm{s}})\,.
\end{split}
\end{equation*}

\paragraph{First term}

For any $\bm{s} \in \mathcal{S}$, we have:
\begin{equation}
\begin{split}
    \ell(\bm{x};\bm{s}) & = \frac{1}{T-t_0}\int_{t_0}^{T} \mathbb{E}_{\bm{x}_t\sim p_{0t} (\bm{x}_t|\bm{x}_0 = \bm{x})}\big[\left \| \nabla_{\bm{x}_t} \log p_{0t} (\bm{x}_t|\bm{x}_0 = \bm{x}) - \bm{s}(\bm{x}_t,t)\right \|_2^2 \big]\mathrm{d}t\\
    & = \frac{1}{T-t_0}\int_{t_0}^{T} \mathbb{E}_{\bm{x}_t\sim p_{0t} (\bm{x}_t|\bm{x}_0 = \bm{x})}\bigg(\left \| \frac{\bm{x}_t-\alpha(t)\bm{x}}{h(t)} + \bm{s}(\bm{x}_t,t)\right \|_2^2 \bigg)\mathrm{d}t\\
    & \leq \frac{3}{T-t_0}\int_{t_0}^{T} \mathbb{E}_{\bm{x}_t\sim p_{0t} (\bm{x}_t|\bm{x}_0 = \bm{x})}\bigg[\bigg(\left \| \frac{\bm{x}_t}{h(t)}\right \|_2^2 + \left \| \frac{\alpha(t)\bm{x}}{h(t)}\right \|_2^2 + \left \| \bm{s}(\bm{x}_t,t)\right \|_2^2 \bigg) \bigg]\mathrm{d}t\\
    & = \frac{3}{T-t_0}\int_{t_0}^{T} \mathbb{E}_{\bm{x}_t\sim p_{0t} (\bm{x}_t|\bm{x}_0 = \bm{x})}\bigg( \left \| \frac{\bm{x}_t}{h(t)}\right \|_2^2 \bigg)\mathrm{d}t\\
    & + \frac{3}{T-t_0}\int_{t_0}^{T} \bigg(\left \| \frac{\alpha(t)\bm{x}}{h(t)}\right \|_2^2 \bigg)\mathrm{d}t\\
    & + \frac{3}{T-t_0}\int_{t_0}^{T} \mathbb{E}_{\bm{x}_t\sim p_{0t} (\bm{x}_t|\bm{x}_0 = \bm{x})}\bigg(\left \| \bm{s}(\bm{x}_t,t)\right \|_2^2 \bigg)\mathrm{d}t\,.
\end{split}
\label{eq:bounded_data_term1_step1}
\end{equation}

For the first part, for forward process SDE~\cref{eq:forward_sde} we can easily derive that $p_{0t} (\bm{x}_t|\bm{x}_0) \sim \mathcal{N}\big(\alpha(t)\bm{x}_0, h(t)I_d\big)$, where $\alpha(t) = e^{-\frac{t}{2}}$ and $h(t) = 1- \alpha(t)^2$.

\begin{equation}
\begin{split}
    & \int_{t_0}^{T} \mathbb{E}_{\bm{x}_t\sim p_{0t} (\bm{x}_t|\bm{x}_0 = \bm{x})}\bigg( \left \| \frac{\bm{x}_t}{h(t)}\right \|_2^2 \bigg)\mathrm{d}t \\
    = & \int_{t_0}^{T} \mathbb{E}_{\bm{x}_t\sim \mathcal{N}\big(\alpha(t)\bm{x}_0, h(t)I_d\big)}\bigg( \left \| \frac{\bm{x}_t}{h(t)}\right \|_2^2 \bigg)\mathrm{d}t \\
    = & \int_{t_0}^{T} \bigg(\sum_{i=1}^{d}\big[\mathbb{E}_{x_t^{(i)}\sim \mathcal{N}\big(\alpha(t)x_0^{(i)}, h(t)\big)}  \bigg(\frac{x_t^{(i)}}{h(t)}\bigg)^2 \big] \bigg)\mathrm{d}t \\
    = & \int_{t_0}^{T}\bigg(\sum_{i=1}^{d}\big[\frac{\alpha(t)^2}{h(t)^2}(x_0^{(i)})^2 + \frac{1}{h(t)}\big] \bigg)\mathrm{d}t \\
    = & \sum_{i=1}^{d}(x_0^{(i)})^2 \int_{t_0}^{T} \frac{\alpha(t)^2}{h(t)^2}\mathrm{d}t + \int_{t_0}^{T} \frac{d}{h(t)}\mathrm{d}t\\
    \leq & \frac{T-t_0}{Tt_0} C_d^2 + d(T-\log(t_0))\,.
\end{split}
\label{eq:bounded_data_term1_step1_part1}
\end{equation}

For the second part:

\begin{equation}
    \int_{t_0}^{T} \left \| \frac{\alpha(t)\bm{x}}{h(t)}\right \|_2^2 \mathrm{d}t\leq C_d^2 \int_{t_0}^{T} \frac{\alpha(t)^2}{h(t)^2} \mathrm{d}t = C_d^2 \frac{T-t_0}{Tt_0}\,.
\label{eq:bounded_data_term1_step1_part2}
\end{equation}

For the third part, by~\citet{Nguyen2021eigenvalue}[Lemma C.1]:

\begin{equation}
    \int_{t_0}^{T} \mathbb{E}_{\bm{x}_t\sim p_{0t} (\bm{x}_t|\bm{x}_0 = \bm{x})}\bigg(\left \| \bm{s}(\bm{x}_t,t)\right \|_2^2 \bigg)\mathrm{d}t \eqsim C_d^2(T-t_0)\,,
\label{eq:bounded_data_term1_step1_part3}
\end{equation}

with probability at least $1- L\exp(-\Omega(m))$ over the randomness of initialization $ \bm{W}$.

Combine~\cref{eq:bounded_data_term1_step1,eq:bounded_data_term1_step1_part1,eq:bounded_data_term1_step1_part2,eq:bounded_data_term1_step1_part3}, we have:

\begin{equation}
\begin{split}
    \ell(\bm{x};\bm{s}) & = \frac{3}{T-t_0}\int_{t_0}^{T} \mathbb{E}_{\bm{x}_t\sim p_{0t} (\bm{x}_t|\bm{x}_0 = \bm{x})}\bigg( \left \| \frac{\bm{x}_t}{h(t)}\right \|_2^2 \bigg)\mathrm{d}t\\
    & + \frac{3}{T-t_0}\int_{t_0}^{T} \bigg(\left \| \frac{\alpha(t)\bm{x}}{h(t)}\right \|_2^2 \bigg)\mathrm{d}t\\
    & + \frac{3}{T-t_0}\int_{t_0}^{T} \mathbb{E}_{\bm{x}_t\sim p_{0t} (\bm{x}_t|\bm{x}_0 = \bm{x})}\bigg(\left \| \bm{s}(\bm{x}_t,t)\right \|_2^2 \bigg)\mathrm{d}t\\
    & \lesssim \frac{3}{T-t_0} \bigg(\frac{T-t_0}{Tt_0}C_d^2 + d(T-\log(t_0))+C_d^2\frac{T-t_0}{Tt_0}+C_d^2(T-t_0)\bigg)\\
    & = \frac{3d(T-\log(t_0))}{T-t_0}+3C_d^2+\frac{6C_d^2}{Tt_0}\,,
\end{split}
\label{eq:bounded_data_term1_step2}
\end{equation}

with probability at least $1- L\exp(-\Omega(m))$ over the randomness of initialization $ \bm{W}$.

According to the Bernstein-type concentration inequality~\citet{chen2023score}[Lemma 15], for $\delta \in (0,1)$, $a \leq 1$ and $\tau >0$, we have:

\begin{equation*}
\begin{split}
    \mathcal{L}(\hat{\bm{s}}) - (1+a)\hat{\mathcal{L}}(\hat{\bm{s}}) \lesssim \frac{1+3/a}{n}\bigg(\frac{d(T-\log(t_0))}{T-t_0}+C_d^2\bigg)\log\frac{\mathcal{N}_{c}(\tau, \mathcal{S})}{\delta}+(2+a)\tau\,,
\end{split}
\end{equation*}

with probability at least $1- \delta- L\exp(-\Omega(m))$ over the randomness of initialization $ \bm{W}$.

\paragraph{Second term}

According to the Bernstein-type concentration inequality~\citet{chen2023score}[Lemma 15] and~\cref{eq:bounded_data_term1_step2}, for $\delta \in (0,1)$, $\tau >0$ and a fixed function $\overline{\bm{s}}$, , we have:

\begin{equation*}
\begin{split}
    \hat{\mathcal{L}}(\overline{\bm{s}}) - (1+a)\mathcal{L}(\overline{\bm{s}}) \lesssim \frac{1+3/a}{n}\bigg(\frac{d(T-\log(t_0))}{T-t_0}+C_d^2\bigg)\log\frac{1}{\delta}+(2+a)\tau\,,
\end{split}
\end{equation*}

with probability at least $1-\delta- L\exp(-\Omega(m))$ over the randomness of initialization $ \bm{W}$.

\paragraph{Third term}

We can derive that:

\begin{equation*}
\begin{split}
    \mathcal{L}(\overline{\bm{s}}) & = \frac{1}{T-t_0}\int_{t_0}^{T} \left \| \nabla \log p_t(\cdot) - \overline{\bm{s}}(\cdot,t)\right \|_{\ell^2(p_t)}^2 \mathrm{d}t \\
    & + \mathcal{L}(\overline{\bm{s}}) - \frac{1}{T-t_0}\int_{t_0}^{T} \left \| \nabla \log p_t(\cdot) - \overline{\bm{s}}(\cdot,t)\right \|_{\ell^2(p_t)}^2 \mathrm{d}t \\
\end{split}
\end{equation*}

For the first part, according to \cref{lemma:approximate_ReLU_network}, since the error term is invariant with respect to translations on $\nabla \log p_t(\cdot)$ and the homogeneity of the ReLU neural network, we can omit $\nabla \log p_t(\bm{0}) = 0$ and rescale bound for the input data, for any $\varepsilon \in (0,1)$, there exist an approximation function $\overline{\bm{s}}$ satisfying $\left \| \nabla \log p_t(\cdot) - \overline{\bm{s}}(\cdot,t) \right \|_{\infty}  \leq \varepsilon$, then we have:

\begin{equation*}
    \frac{1}{T-t_0}\int_{t_0}^{T} \left \| \nabla \log p_t(\cdot) - \overline{\bm{s}}(\cdot,t)\right \|_{\ell^2(p_t)}^2 \mathrm{d}t \leq d\varepsilon^2\,,
\end{equation*}

that satisfy the configuration of network architecture in~\cref{lemma:approximate_ReLU_network}.

For the second part:
\begin{equation*}
\begin{split}
    & \mathcal{L}(\overline{\bm{s}}) - \frac{1}{T-t_0}\int_{t_0}^{T} \left \| \nabla \log p_t(\cdot) - \overline{\bm{s}}(\cdot,t)\right \|_{\ell^2(p_t)}^2 \mathrm{d}t\\
    & = \frac{1}{T-t_0}\int_{t_0}^{T} \bigg(\mathbb{E}_{\bm{x}_0\sim p_0}\mathbb{E}_{\bm{x}_t\sim p_{0t} (\bm{x}_t|\bm{x}_0 )}\big[\left \| \nabla_{\bm{x}_t} \log p_{0t} (\bm{x}_t|\bm{x}_0 ) - \bm{s}(\bm{x}_t,t)\right \|_2^2 \big]- \left \| \nabla \log p_t(\cdot) - \overline{\bm{s}}(\cdot,t)\right \|_{\ell^2(p_t)}^2 \bigg)\mathrm{d}t\,.
\end{split}
\end{equation*}

According to~\citet{vincent2011connection}, we have:

\begin{equation*}
\begin{split}
    & \mathbb{E}_{\bm{x}_0\sim p_0}\mathbb{E}_{\bm{x}_t\sim p_{0t} (\bm{x}_t|\bm{x}_0 = \bm{x}_{(i)})}\big[\left \| \nabla_{\bm{x}_t} \log p_{0t} (\bm{x}_t|\bm{x}_0 = \bm{x}_{(i)}) - \bm{s}(\bm{x}_t,t)\right \|_2^2 \big]- \left \| \nabla \log p_t(\cdot) - \overline{\bm{s}}(\cdot,t)\right \|_{\ell^2(p_t)}^2\\
    = & \mathbb{E}_{\bm{x}_0\sim p_0}\mathbb{E}_{\bm{x}_t\sim p_{0t} (\bm{x}_t|\bm{x}_0 )}\big[\left \| \nabla_{\bm{x}_t} \log p_{0t} (\bm{x}_t|\bm{x}_0 )\right \|_2^2 \big] - \left \| \nabla \log p_t(\cdot)\right \|_{\ell^2(p_t)}^2\,,
\end{split}
\end{equation*}

which is an absolute value that does not depend on $\bm{s}$. So we can define that:

\begin{equation*}
    E_2 \coloneqq \mathbb{E}_{\bm{x}_0\sim p_0}\mathbb{E}_{\bm{x}_t\sim p_{0t} (\bm{x}_t|\bm{x}_0 )}\big[\left \| \nabla_{\bm{x}_t} \log p_{0t} (\bm{x}_t|\bm{x}_0 )\right \|_2^2 \big] - \left \| \nabla \log p_t(\cdot)\right \|_{\ell^2(p_t)}^2\,.
\end{equation*}

So if we choose $\overline{\bm{s}}$ is the approximation function that~\cref{lemma:approximate_ReLU_network} provide, then we have:

\begin{equation*}
    \mathcal{L}(\overline{\bm{s}}) \leq d\varepsilon^2 +E_2 \,.
\end{equation*}

\paragraph{Putting things together}

Combine all three terms, we have:

\begin{equation*}
\begin{split}
    \mathcal{L}(\hat{\bm{s}}) & \leq \bigg(\mathcal{L}(\hat{\bm{s}}) - (1+a)\hat{\mathcal{L}}(\hat{\bm{s}})\bigg) + (1+a)\bigg(\hat{\mathcal{L}}(\overline{\bm{s}}) - (1+a)\mathcal{L}(\overline{\bm{s}})\bigg) + (1+a)^2 \mathcal{L}(\overline{\bm{s}})\\
    & \leq \bigg(\mathcal{L}(\hat{\bm{s}}) - (1+a)\hat{\mathcal{L}}(\hat{\bm{s}})\bigg) + (1+a)\bigg(\hat{\mathcal{L}}(\overline{\bm{s}}) - (1+a)\mathcal{L}(\overline{\bm{s}})\bigg) + (1+a)^2 (d\varepsilon^2 +E_2)\\
    & = \bigg(\mathcal{L}(\hat{\bm{s}}) - (1+a)\hat{\mathcal{L}}(\hat{\bm{s}})\bigg) + (1+a)\bigg(\hat{\mathcal{L}}(\overline{\bm{s}}) - (1+a)\mathcal{L}(\overline{\bm{s}})\bigg) + (1+a)^2 d\varepsilon^2 + (2a+a^2)E_2 + E_2\\
\end{split}
\end{equation*}

Then:

\begin{equation*}
\begin{split}
    & \frac{1}{T-t_0}\int_{t_0}^{T} \left \| \nabla \log p_t(\cdot) - \hat{\bm{s}}(\cdot,t)\right \|_{\ell^2(p_t)}^2 \mathrm{d}t \\
    = & \mathcal{L}(\hat{\bm{s}}) - E_2\\
    = & \bigg(\mathcal{L}(\hat{\bm{s}}) - (1+a)\hat{\mathcal{L}}(\hat{\bm{s}})\bigg) + (1+a)\bigg(\hat{\mathcal{L}}(\overline{\bm{s}}) - (1+a)\mathcal{L}(\overline{\bm{s}})\bigg) + (1+a)^2 d\varepsilon^2 + (2a+a^2)E_2\\
    \lesssim & \bigg(\frac{1+3/a}{n}\bigg(\frac{d(T-\log(t_0))}{T-t_0}+C_d^2\bigg)\log\frac{\mathcal{N}_{c}(\tau, \mathcal{S})}{\delta}+(2+a)\tau\bigg)\\
    + & (1+a)\bigg(\frac{1+3/a}{n}\bigg(\frac{d(T-\log(t_0))}{T-t_0}+C_d^2\bigg)\log\frac{1}{\delta}+(2+a)\tau\bigg) \\
    + & (1+a)^2 d\varepsilon^2 + (2a+a^2)E_2\,,
\end{split}
\end{equation*}

with probability at least $1- 2\delta- 2L\exp(-\Omega(m))$ over the randomness of initialization $ \bm{W}$.

Let $a = \varepsilon^2$ and $\tau = \frac{1}{n}$, then we have:
\begin{equation*}
    \frac{1}{T-t_0}\int_{t_0}^{T} \left \| \nabla \log p_t(\cdot) - \hat{\bm{s}}(\cdot,t)\right \|_{\ell^2(p_t)}^2 \mathrm{d}t \lesssim \frac{1}{n\varepsilon^2}\bigg(\frac{d(T-\log(t_0))}{T-t_0}+C_d^2\bigg)\log\frac{\mathcal{N}_{c}(\frac{1}{n}, \mathcal{S})}{\delta}+\frac{1}{n} + d\varepsilon^2\,,
\end{equation*}

with probability at least $1- 2\delta- 2L\exp(-\Omega(m))$ over the randomness of initialization $ \bm{W}$.
\end{proof}

\section{Discussion of Lipschitz property of score function}
\label{sec:discussion}

Here we provide an example to illustrate how the Lipschitz constant of the score function in a causal model is related to the model's nonlinear functions.

Here we give an example with $d=3$, the causality is $x^{(1)} \Rightarrow x^{(2)} \Rightarrow x^{(3)} $.

According to~\cref{eq:causal_score_function}, we have that:

\begin{equation*}
    s_1(\bm{x}) = -\frac{x^{(1)}}{\sigma_1^2} +\frac{\partial f_2(x^{(1)})}{\partial x^{(1)}}\frac{\epsilon_2}{\sigma_2^2}, \quad s_2(\bm{x}) = \frac{f_2(x^{(1)}) - x^{(2)}}{\sigma_2^2} +\frac{\partial f_3(x^{(2)})}{\partial x^{(2)}}\frac{\epsilon_3}{\sigma_3^2}, \quad s_3(\bm{x}) = \frac{f_3(x^{(2)}) - x^{(3)}}{\sigma_3^2}\,.
\end{equation*}

Then we can derive that:

\begin{equation*}
    \frac{\partial s_1(\bm{x})}{\partial x^{(2)}} = \frac{\partial s_1(\bm{x})}{\partial x^{(3)}} = \frac{\partial s_2(\bm{x})}{\partial x^{(3)}} = \frac{\partial s_3(\bm{x})}{\partial x^{(1)}} = \frac{\partial s_3(\bm{x})}{\partial x^{(2)}} = 0\,,
\end{equation*}

\begin{equation*}
\frac{\partial s_1(\bm{x})}{\partial x^{(1)}} = -\frac{1}{\sigma_1^2} + \frac{\partial^2 f_2(x^{(1)})}{\partial x^{(1)2}}\frac{\epsilon_2}{\sigma_2^2}\,,
\end{equation*}

\begin{equation*}
\frac{\partial s_2(\bm{x})}{\partial x^{(2)}} = -\frac{1}{\sigma_2^2} + \frac{\partial^2 f_3(x^{(2)})}{\partial x^{(2)2}}\frac{\epsilon_3}{\sigma_3^2}\,,
\end{equation*}

\begin{equation*}
\frac{\partial s_2(\bm{x})}{\partial x^{(1)}} = \frac{\partial^2 f_3(x^{(2)})}{\partial x^{(2)}\partial x^{(1)}}\frac{\epsilon_3}{\sigma_3^2}\,,
\end{equation*}

\begin{equation*}
\frac{\partial s_3(\bm{x})}{\partial x^{(3)}} = -\frac{1}{\sigma_3^2}\,.
\end{equation*}

We denote $\bm{J}$ as the Jacobian of the score function. Then we can derive:

\begin{equation*}
\begin{split}
    \left \| \bm{J} \right \|_{\ell_\infty} & = \max\bigg(\left | -\frac{1}{\sigma_1^2} + \frac{\partial^2 f_2(x^{(1)})}{\partial x^{(1)2}}\frac{\epsilon_2}{\sigma_2^2} \right |  ,\left | -\frac{1}{\sigma_2^2} + \frac{\partial^2 f_3(x^{(2)})}{\partial x^{(2)2}}\frac{\epsilon_3}{\sigma_3^2} \right | + \left | \frac{\partial^2 f_3(x^{(2)})}{\partial x^{(2)}\partial x^{(1)}}\frac{\epsilon_3}{\sigma_3^2} \right | ,\frac{1}{\sigma_3^2}\bigg)\\
    & \leq \max\bigg(\frac{1}{\sigma_1^2} + \left |\sup \frac{\partial^2 f_2(x^{(1)})}{\partial x^{(1)2}}\frac{\epsilon_2}{\sigma_2^2} \right |, \frac{1}{\sigma_2^2} + \left | \sup\frac{\partial^2 f_3(x^{(2)})}{\partial x^{(2)2}}\frac{\epsilon_3}{\sigma_3^2} \right | + \left | \sup\frac{\partial^2 f_3(x^{(2)})}{\partial x^{(2)}\partial x^{(1)}}\frac{\epsilon_3}{\sigma_3^2} \right | ,\frac{1}{\sigma_3^2}\bigg)\,.
\end{split}
\end{equation*}

Then we have that for any $L$ satisfy:

\begin{equation*}
    L\geq \max\bigg(\frac{1}{\sigma_1^2} + \left |\sup \frac{\partial^2 f_2(x^{(1)})}{\partial x^{(1)2}}\frac{\epsilon_2}{\sigma_2^2} \right |, \frac{1}{\sigma_2^2} + \left | \sup\frac{\partial^2 f_3(x^{(2)})}{\partial x^{(2)2}}\frac{\epsilon_3}{\sigma_3^2} \right | + \left | \sup\frac{\partial^2 f_3(x^{(2)})}{\partial x^{(2)}\partial x^{(1)}}\frac{\epsilon_3}{\sigma_3^2} \right | ,\frac{1}{\sigma_3^2}\bigg)\,,
\end{equation*}

then the $L$ is one of the Lipschitz constants of the score function.

According to the previous analysis, we can obtain the Lipschitz property of the score function by imposing some assumptions on the nonlinear function and noise of the model. For causal models with more complex DAG, the relationship between Lipshcitz and the model will be more complicated, but as long as the second derivatives of nonlinear functions are bounded and the variance of the noise is non-zero, the score function has Lipschitz property, and the value of the Lipschitz constant depends on the nonlinear function and noise of the model.

\section{Broader Impacts}
\label{sec:broader_impacts}

This is a theoretical work that provides theoretical analysis for causal inference based on score matching. As such, we do not expect our work to have negative societal bias, as we do not focus on obtaining state-of-the-art results in a particular task. On the contrary, our work can have various benefits for the community: 

\begin{itemize}
    \item Causal inference is crucial in fields such as medicine, social sciences, and economics for understanding the essence of phenomena and formulating effective intervention measures. The outcomes of this work not only provide researchers in these fields with more reliable and interpretable theoretical insights into causal inference, driving scientific advancements and societal development, but also the score matching-based causal inference methods can help uncover hidden causal effects and mechanisms, providing a scientific foundation for decision-making in areas such as social equity, educational policies, and medical interventions.
    \item The theoretical framework and methods developed in this work can inspire and inform other causal inference approaches, fostering interdisciplinary research and collaboration, and expanding the application scope of causal inference in different domains.
\end{itemize}

\end{document}